\crefname{theorem}{Theorem}{}
\crefname{hypothesis}{Hyp.}{Hypotheses}
\crefname{assumption}{Assumption}{}
\crefname{prop}{Proposition}{}
\theoremstyle{plain}
\newtheorem{theorem}{Theorem}[section]
\newtheorem{definition}[theorem]{Definition}
\newtheorem{hypothesis}[theorem]{Hypothesis}
\newtheorem{proposition}[theorem]{Proposition}
\newtheorem{assumption}[theorem]{Assumption}
\newcommand{\defn}[1]{\textbf{#1}}
\newcommand{\boldY}{\boldsymbol{Y}}
\newcommand{\yy}{\boldsymbol{y}}
\newcommand{\ent}{\mathrm{H}}
\newcommand{\ours}{COIECD\xspace}
\definecolor{darkblue}{rgb}{0.0,0.0,0.5}
\definecolor{purple}{rgb}{0.5,0.0,0.5}
\newcommand{\db}[1]{\textcolor{darkblue}{\bf\scriptscriptstyle \selectfont \,(#1)}}
\newcommand{\ddel}[1]{\textcolor{purple}{\bf\scriptscriptstyle \selectfont \,(#1)}}
\newcommand{\ra}[1]{\renewcommand{\arraystretch}{#1}}
\definecolor{r}{rgb}{244, 193, 187}
\definecolor{g}{rgb}{131, 197, 183}
\title{Discerning and Resolving Knowledge Conflicts through Adaptive\\
 Decoding with Contextual Information-Entropy Constraint}
\author{
Xiaowei Yuan$^{1,2,3}$, Zhao Yang$^{1,2}$, Yequan Wang$^{*3}$, Shengping Liu
$^4$, Jun Zhao$^{1,2}$, Kang Liu\thanks{Corresponding authors}$^{1,2,5}$\\
$^1$The Laboratory of Cognition and Decision Intelligence for Complex Systems,\\
Institute of Automation, Chinese Academy of Sciences\\
$^2$School of Artificial Intelligence, University of Chinese Academy of Sciences\\
$^3$Beijing Academy of Artificial Intelligence, $^4$Unisound AI Technology Co., Ltd.\\
$^5$Shanghai Artificial Intelligence Laboratory\\
\texttt{
yuanxiaowei2022@ia.ac.cn,
\{zhao.yang, jzhao, kliu\}@nlpr.ia.ac.cn}\\
\texttt{tshwangyequan@gmail.com, liushengping@unisound.com}
}
\begin{document}
\maketitle

\begin{abstract}
Large language models internalize enormous \textit{parametric knowledge} during pre-training. Concurrently, realistic applications necessitate external \textit{contextual knowledge} to aid models on the underlying tasks. This raises a crucial dilemma known as \textit{knowledge conflicts}, where the contextual knowledge clashes with the parametric knowledge. 
However, existing decoding works are specialized in resolving knowledge conflicts and could inadvertently deteriorate performance in absence of conflicts.
In this paper, we propose an adaptive decoding method, termed as contextual information-entropy constraint decoding (COIECD), to discern whether the knowledge conflicts occur and resolve them. It can improve the model's faithfulness to conflicting context, and simultaneously maintain high performance among non-conflicting context. 
Our experiments show that COIECD exhibits strong performance and robustness over knowledge conflicts in realistic datasets. Code is available at \href{https://github.com/Stacy027/COIECD}{https://github.com/Stacy027/COIECD}.
\end{abstract}

\section{Introduction}

Characterized by the massive knowledge internalized into the parameters~\cite{PetroniRRLBWM19, GevaSBL21, RobertsRS20}, Large language models (LLMs) have pioneered numerous breakthroughs across various domains~\cite{vaswani2017attention, devlin2018bert, brown2020language, chung2022scaling, touvron2023llama}. 
Meanwhile, LLMs struggle with less popular factual knowledge~\cite{POPQA}, are fundamentally incapable of adapting over time~\cite{LazaridouKGALTG21,RealTimeQA} and prone to hallucinations~\cite{Shuster}. These challenges necessitate the incorporation of non-parametric knowledge sources, through retrieval~\cite{shi2023replug} or application of tools~\cite{Toolformer}.
However, it has given rise to a sharp dilemma: \textit{knowledge conflicts}, defined by \citet{LongprePCRD021}, where the non-parametric \textit{contextual knowledge} conflicts with internal \textit{parametric knowledge}.
Prior works~\cite{LongprePCRD021,ChenZC22,LiRZWLVYK23,prompting,Wangyike} have flagged that when confronting conflicts, larger models have a greater tendency to ignore the given context when it contradicts with model’s parametric knowledge. As shown in the Figure~\ref{Figure1}, due to the model's bias towards its parametric knowledge, it fails to ground its answer in the conflicting context.
\begin{figure}[t]
\begin{center}
\includegraphics[scale=0.45]{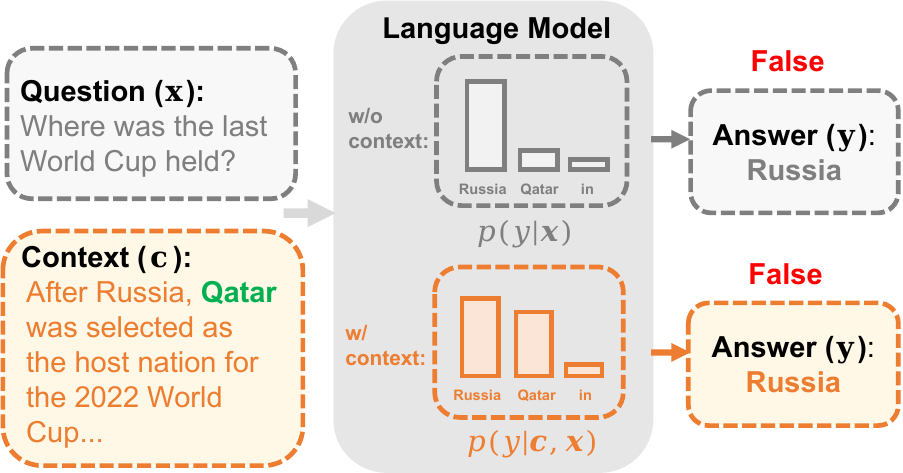} 
\caption{The illustration of knowledge conflict. Due to model's bias towards its outdated parametric knowledge, it fails to accurately ground answer in the latest context, which conflicts with the LM's knowledge.}
\label{Figure1}
\end{center}
\end{figure}

Early attempts on knowledge conflict-resolving methods resort to fine-tuning a small-scale model like T5~\cite{T5} by data augmentation, such as KAFT~\cite{LiRZWLVYK23} and DisentQA~\cite{NeemanAHCSA23}. Those fine-tuning methods bear the risk of undermining the intrinsic linguistic capabilities of the models~\cite{DBLP:journals/corr/abs-2310-05492}. 
Another line of works employ various decoding strategies during inference. For instance, 
Contrastive Decoding (CD)~\cite{cd,WangWLGYR23} leverages the discrepancy in contextual impact on the model's probability distribution of high-probability words for decoding.
Another representative method is Context-Aware Decoding (CAD)~\cite{cad}, which draws upon CD to amplifies the contextual distribution for all words.  
However, \textbf{existing decoding methods could inadvertently deteriorate performance in absence of conflicts}. 
As evidenced in the Figure~\ref{Figure2}, while these methods effectively mitigate over-reliance on parametric memory for knowledge conflicts, their performances deteriorate on the non-conflicting data derived from NaturalQuestions dataset. Typically, these methods generally work under the experimental scenario where all contexts are presumed to be inherently conflicting, without considering the presence or absence of conflicts in realistic scenario.
Thus, we posit the core question lies in: \textbf{how to discern knowledge conflicts between contexts and LLMs during inference}. 

\begin{figure}[t]
\begin{center}
\includegraphics[scale=0.35]{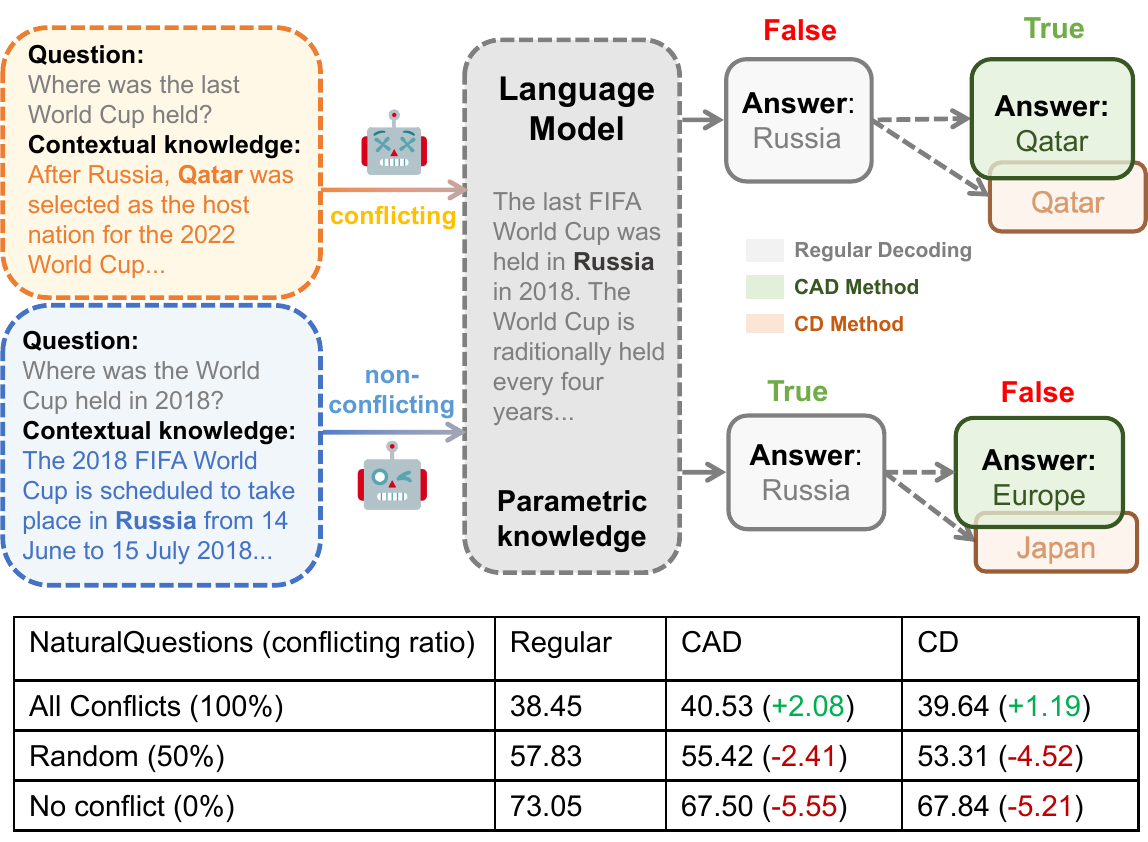} 
\caption{The illustration of conflicting and non-conflicting scenarios. Existing methods adeptly handle conflicts but struggle to address non-conflicting contexts. The table presented below illustrates the EM scores of existing conflict-solving methods and regular decoding method across diverse conflict ratio data. Numbers within brackets are the discrepancy between Regular and current method. More detailed analyses are in Appendix~\ref{appendix:analysis}.}
\label{Figure2}
\end{center}
\end{figure}

To this end, the paper proposes an adaptive decoding method, termed \textbf{CO}ntextual \textbf{I}nformation-\textbf{E}ntropy \textbf{C}onstraint \textbf{D}ecoding (\textbf{COIECD}), aimed at discerning knowledge conflicts and employing distinct strategies for conflicting and non-conflicting data.
Given the observations that LLMs tend to be well-calibrated~\cite{Kadavath} and their generations usually lie in a narrow and nearly flat entropy band~\cite{stableentropy}, we adopt an adaptive decoding strategy that only alleviates conflict when LLMs generate tokens violate an entropy-information constraint (band). To be specific, when discerning knowledge conflicts, it is important to consider whether LLMs have already aligned with contextual knowledge. If so, the entropy of contextual generation would not have a drastic change.
Therefore, we propose discerning the knowledge conflicts by measuring the changes of the distribution entropy at token level, and then employ tailored decoding strategies for conflicting and non-conflicting tokens.

We benchmark COIECD on several popular context-specific question answering (QA) datasets, including NaturalQuestions (NQ)~\cite{kwiatkowski2019natural}, SQuAD 1.1~\cite{squad}, StrategyQA~\cite{geva2021did}, and Counterfacts~\cite{LongprePCRD021}.
Over all tasks, COIECD achieves superior or competitive performance compared to the baselines, demonstrating the effectiveness and robustness of our method.

To summarize, the highlights of the paper are as follows:
\begin{itemize}
    \item This study presents a contextual information-entropy constraint to discern knowledge conflicts, between parametric knowledge in LLMs and non-parametric contextual knowledge. The constraint has proven effective in realistic datasets, which are characterized by the unpredictability of conflicts.
    \item The paper develops tailored decoding strategies to solve knowledge conflicts based on the contextual information-entropy constraint. Experimental results demonstrate that our method significantly augments the model's faithfulness to conflicting contexts and exhibits enhanced performance and robustness varying across diverse datasets and models.
\end{itemize}

\section{Related Work}
When presented with an external context with conflicting knowledge, 
prior works~\cite{LongprePCRD021,ChenZC22} have flagged that larger models have a greater tendency to ignore the conflicting context. 
Existing approaches for improving model’s faithfulness to the context, such as the prompting-based method~\cite{prompting}, is limited to specific instruction-finetuned LLMs and do not universally apply.
Other methods resort to fine-tuning a small-scale model like T5~\cite{T5} by counterfactual contexts, such as KAFT~\cite{LiRZWLVYK23} and DisentQA~\cite{NeemanAHCSA23}.
\citet{Wangyike} proposed an evaluation framework for simulating contextual knowledge conflicts and quantitatively evaluating to what extent LLMs achieve these goals.

Another line of works employ various decoding strategies during inference. 
SC~\cite{self-consistency} proposed the idea that a complex QA problem typically admits multiple different ways of thinking leading to its unique correct answer. It acts as a general enhanced decoding strategy.
CD~\cite{cd} adopted a contrastive object, which measures the discrepancy between two distributions to facilitate decoding. In addressing knowledge conflicts, this discrepancy is assessed based on the output probabilities with and without context. 
~\citet{dola} proposed contrastive layer decoding to enhance factuality, diverging from our focus.
Most similar to our work is the CAD~\cite{cad} method. It broadly amplifies the contextual distribution for all words without considering the presence of conflicting contexts, a limitation our work aims to address.

\section{Contextual Information-Entropy Constraint Decoding}

\paragraph{Discerning Conflicts (\S\ref{sec:constraint}).}
First, we argue that if a context has consistent knowledge with the model’s parameters, this context could be a natural generation of the model.\footnote{The assumption is empirically validated by a comparison of distribution entropy between conflicting and non-conflicting contexts, as detailed in the Appendix~\ref{appendix:assumption}.} It motives us to employ the theories of \textit{Stable Entropy Hypothesis}~\cite{stableentropy} and \textit{Locally Typical Set}~\cite{typicalsampling}\footnote{The detailed definitions of these concepts are provided in Appendix~\ref{properties}.} to measure whether there are unnatural tokens (conflicting knowledge) in the contexts, which demonstrate that natural-sounding language should ideally be constrained within a specific range. Based on these two theories, we introduce a novel decoding constraint termed the \textbf{contextual information-entropy constraint} which aims to identify the violation of token that results in less contextual generation attributed to knowledge conflicts, as shown in Figure \ref{fig:method}.


\paragraph{Resolving Conflicts (\S\ref{sec:method}).}
Then we implement tailored decoding strategies, which cater to tokens identified as either conflicting or non-conflicting. For non-conflicting tokens, model is expected to refer to both parametric and contextual knowledge.
For conflicting tokens, model should prioritize the contextual knowledge.
To this end, we calculate a contextual contrastive object~\cite{cd}, which represents the distribution discrepancy derived from the context. This object is then utilized to variably adjust token distributions in accordance with the contextual information-entropy constraint.

\begin{figure}[t]
\begin{center}
\includegraphics[scale=0.55]{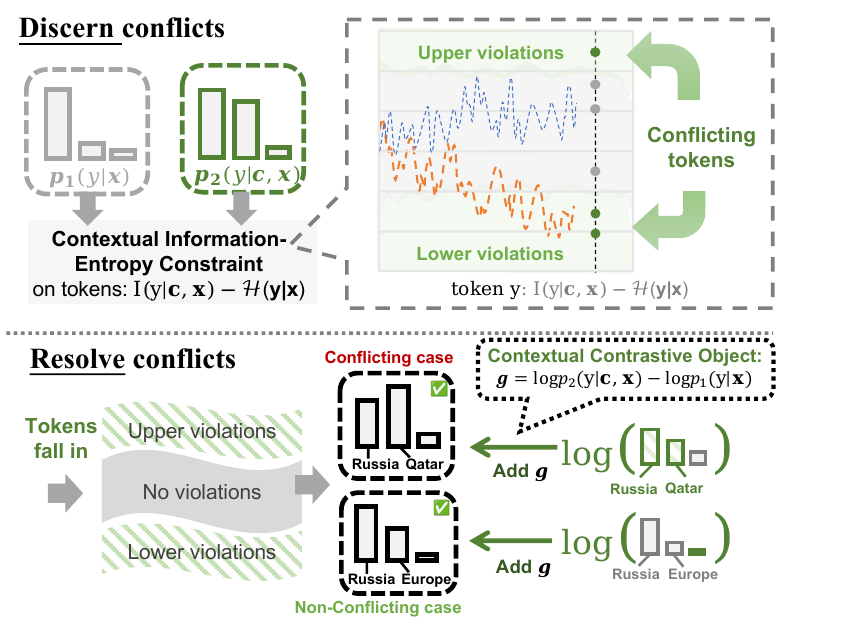} 
\caption{Above: Based on the contextual information-entropy constraint, tokens that fall into either the lower or upper violation zone of the constraint are typically associated with conflicts. Below: Distinct decoding strategies are employed for conflicting and non-conflicting tokens.}
\label{fig:method}
\end{center}
\end{figure}

\subsection{Contextual Information-Entropy Constraint} \label{sec:constraint}
We assume that if the contextual knowledge aligns with model's parametric knowledge, then the context can be a coherent and natural generation of model in some way. In this setting, the characteristics of natural language generation hold true for the non-conflicting contexts.
Given the observations that LLMs tend to be well-calibrated~\cite{Kadavath} and their generations usually lie in a narrow and nearly flat entropy band~\cite{stableentropy}, we craft a contextual constraint to measure the changes of the distribution entropy and token information, using it as an indicator to discern knowledge conflicts on token-grained level.
We define the entropy of the genertated token $\boldsymbol{y}_t$ by given the question $\boldsymbol{x}$ and generated history $\boldsymbol{y}_{<t}$ following~\citet{stableentropy} as
\begin{equation}
    \mathcal{H}(\boldsymbol{y}_t|\boldsymbol{x},\boldsymbol{y}_{<t}) = \underset{y_t \sim p(\cdot| \boldsymbol{y}_{<t})}{\mathbb{E}}  -\log p(y_t|\boldsymbol{x},\boldsymbol{y}_{<t}) 
\end{equation}
For brevity, we use $\mathcal{H}_{1}(\boldsymbol{y_t})$ to represent the entropy of conditional distribution over question $\boldsymbol{x}$ and generated history $\boldsymbol{y}_{<t}$, and $\mathcal{H}_{2}(\boldsymbol{y_t})$ denotes the entropy conditioning by $\boldsymbol{x}$, $\boldsymbol{y}_{<t}$, and assumed generation $\boldsymbol{c}$.
\begin{align}
    \mathcal{H}_{1}(\boldsymbol{y_t}) &= \mathcal{H}(\boldsymbol{y}_t|\boldsymbol{x},\boldsymbol{y}_{<t})\\
    \mathcal{H}_{2}(\boldsymbol{y_t}) &= \mathcal{H}(\boldsymbol{y}_t|\boldsymbol{x},\boldsymbol{c},\boldsymbol{y}_{<t}) 
\end{align}

The \textit{Stable Entropy Hypothesis}~\cite{stableentropy} proposes that natural language generations usually lie in a narrow and flat entropy band.
When we posit that a non-conflicting context can arise as a natural generation of model, the entropy shift should adhere to the bound. Deviations from it may indicate a potential conflicting context. 
In such instances, it becomes crucial to precisely identify which specific tokens, reflecting the conflicts, are likely to cause the model to exceed its entropy bound during generation. 
To address this, we utilize the \textit{Locally Typical Set}~\cite{typicalsampling} to discern tokens by the following bound on information-entropy shift. The proofs are detailed in Appendix~\ref{derivations}.
\begin{proposition}[Bound on information-entropy shift]
The information content of a random variable is quantified as its negative log-probability~\cite{typicalsampling}.
Let the information content of token $y_t$ be  $I(y_t)=-\log p(y_t\mid \boldsymbol{x},\boldsymbol{c},\boldsymbol{y}_{<t})$, and we define a \textbf{information-entropy shift} as: $I(y_t) - \mathcal{H}_{1}(\boldsymbol{y_t}) $. The following bound holds for a constant $\gamma > 0$:
\begin{equation} 
    \big | I(y_t) - \mathcal{H}_{1}(\boldsymbol{y_t}) \big | < \gamma 
\end{equation}
\end{proposition}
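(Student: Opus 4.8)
The plan is to prove the bound by a triangle inequality, inserting the context-conditioned entropy $\mathcal{H}_{2}(\boldsymbol{y_t})$ as an intermediate quantity so that the two distinct phenomena invoked in the setup --- local typicality and entropy stability --- can be handled separately. Concretely, I would write
\[
\big| I(y_t) - \mathcal{H}_{1}(\boldsymbol{y_t}) \big| \;\leq\; \big| I(y_t) - \mathcal{H}_{2}(\boldsymbol{y_t}) \big| \;+\; \big| \mathcal{H}_{2}(\boldsymbol{y_t}) - \mathcal{H}_{1}(\boldsymbol{y_t}) \big|,
\]
bound each term by a constant, and then set $\gamma$ equal to the sum of the two bounds. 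The point of this decomposition is that $I(y_t)$ and $\mathcal{H}_{2}(\boldsymbol{y_t})$ both refer to the \emph{same} context-conditioned distribution $p(\cdot \mid \boldsymbol{x},\boldsymbol{c},\boldsymbol{y}_{<t})$, whereas the gap between $\mathcal{H}_{2}$ and $\mathcal{H}_{1}$ is purely the effect of conditioning on the context $\boldsymbol{c}$.

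For the first term I would invoke the \emph{Locally Typical Set}. By construction $I(y_t)=-\log p(y_t\mid \boldsymbol{x},\boldsymbol{c},\boldsymbol{y}_{<t})$ is exactly the information content of the realized token under the context-conditioned distribution, and $\mathcal{H}_{2}(\boldsymbol{y_t})$ is precisely the expected information content (entropy) of that same distribution. The defining property of the locally typical set is that for natural-language generations the per-token information content stays within a small window $\gamma_1$ of the corresponding per-token conditional entropy; applied to the context-conditioned distribution this gives $\big| I(y_t) - \mathcal{H}_{2}(\boldsymbol{y_t}) \big| < \gamma_1$ immediately, with essentially no further work.

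For the second term I would use the \emph{Stable Entropy Hypothesis} together with the standing non-conflicting assumption. Under the hypothesis that the context $\boldsymbol{c}$ is consistent with the model's parametric knowledge --- so that $\boldsymbol{c}$ could itself arise as a natural generation of the model --- both the unconditioned entropy $\mathcal{H}_{1}$ and the context-conditioned entropy $\mathcal{H}_{2}$ must lie inside the same narrow, nearly flat entropy band. The finite width of that band then bounds their difference: $\big| \mathcal{H}_{2}(\boldsymbol{y_t}) - \mathcal{H}_{1}(\boldsymbol{y_t}) \big| < \gamma_2$. Combining the two estimates through the triangle inequality with $\gamma = \gamma_1 + \gamma_2$ yields the claimed bound.

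I expect the second term to be the main obstacle and the only genuinely non-routine step, since the first is little more than a restatement of the typical-set definition applied to the correct distribution. The difficulty is that the non-conflicting assumption is semantic rather than quantitative, so the care lies in converting ``$\boldsymbol{c}$ is non-conflicting, hence a plausible natural generation'' into the statement that $\mathcal{H}_{2}$ falls inside the same entropy band as $\mathcal{H}_{1}$. This step leans on the empirical regularity asserted by the Stable Entropy Hypothesis rather than on a purely deductive derivation, and it is exactly the condition whose violation the contextual information-entropy constraint is subsequently designed to detect: when $\boldsymbol{c}$ \emph{does} conflict, $\mathcal{H}_{2}$ escapes the band, the second term can no longer be controlled by $\gamma_2$, and the bound $|I(y_t)-\mathcal{H}_{1}(\boldsymbol{y_t})| < \gamma$ fails --- flagging the token as conflicting.
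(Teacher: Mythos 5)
Your proposal is correct and follows essentially the same route as the paper: the paper's proof likewise inserts $\mathcal{H}_{2}(\boldsymbol{y_t})$, bounds $\big|I(y_t)-\mathcal{H}_{2}(\boldsymbol{y_t})\big|<\epsilon$ by local typicality, bounds $\big|\mathcal{H}_{2}(\boldsymbol{y_t})-\mathcal{H}_{1}(\boldsymbol{y_t})\big|<\beta$ via the stable entropy zone (stated there as a separate ``Bound on Entropy Shift'' proposition, where both entropies deviate from a common smoothed mean $\mu_{\mathcal{H}}$ by at most $\beta/2$), and concludes by the triangle inequality with $\gamma=\beta+\epsilon$. Your identification of the second term as the step resting on the non-conflicting assumption and the empirical Stable Entropy Hypothesis matches the paper's treatment exactly.
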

\noindent In words, the information-entropy shift can be bounded by some constant denoted as $\gamma$. That means, if the shift of a token adheres to this constraint, we can view it as a plausible candidate of non-conflicting contextual generation. Conversely, any violation of token indicate the potential conflicts with a high probability.

To formalize the bound into constraint of decoding, we follow a popular constraint paradigm in decoding techniques such as nucleus sampling~\cite{holtzman2019curious} and CD~\cite{cd}. We employ the \textit{softmax} function to normalize the information-entropy shift into distribution:
\begin{equation} \label{def:shift}
    p_{\delta}(y_t) = \mathrm{softmax}(I(y_t) - \mathcal{H}_{1}(\boldsymbol{y_t}))
\end{equation}
Then we have an upper bound $u_{p_{\delta}}$ and a lower bound $l_{p_{\delta}}$ to constrain the vocabulary subset when decoding as:
\begin{align}
&u_{p_{\delta}} = \lambda \max_w p_{\delta}(w) \label{upper}\\
&l_{p_{\delta}} = 
\begin{cases} 
  l'_{p_{\delta}}  & \text{if } \sum \mathbb{I}(p_{\delta}(y_t) < l'_{p_{\delta}}) > 1, \\
  0 & \text{otherwise}.
\end{cases} \label{lower}
 \\& \text{where}\quad l'_{p_{\delta}} = \frac{1}{\lambda} \min_w p_{\delta}(w) \nonumber
\end{align}
Here $\lambda$ is a scaling factor in $(0, 1]$ and $\mathbb{I}$ is an indicator function. 
Eq.~\ref{lower} implies that the lower-bound probability $l_{p_{\delta}}$ takes the value in cases where multiple tokens exhibit probabilities $p_{\delta}(y_t)$ falling below the $l'_{p_{\delta}}$.
Otherwise, \( l_{p_{\delta}} \) is set to 0, indicating that only a solitary token violates the lower bound. It reflects model's high confidence with the absence of conflict for that token.
Based on the bounds, the constraint subset $\mathcal{C}(\yy_{<t}) \subseteq \mathcal{V}$ is as follows:
\begin{equation}
    \mathcal{C}(\yy_{<t}) = \{y \in \mathcal{V}:  l_{p_{\delta}}
\leq p_{\delta}(y_t) \leq u_{p_{\delta}} \}
\end{equation}




\subsection{Adaptive Decoding} \label{sec:method}
Before employing distinct decoding strategies for the conflicting and non-conflicting tokens, initially, we define that
\begin{align}
    p_{1}(y_t) &= p(y_t|\boldsymbol{x},\yy_{<t}) \\
    p_{2}(y_t) &= p(y_t|\boldsymbol{x},\boldsymbol{c},\yy_{<t})
\end{align}
Here the parametric knowledge is factored out from the model’s output distribution as $p_1$, in accordance with~\citet{cad}.
The output distribution $p_2$ that incorporates context can be interpreted as context-aware knowledge, which integrates knowledge from both parameters and context.
Then a contextual contrastive object $g$~\cite{cd} is calculated to quantify the divergence between $p_1$ and $p_2$:
\begin{equation}
    g(y_t) = \log p_2(y_t) - \log p_1(y_t)
\end{equation}
which aims to refine the discrepancy brought by the context.
It assumes that $p_1$ has a stronger tendency to produce the outputs that adhere to parametric knowledge of the model. 
The $g$ is to factor out the model's inherent memory and favor the contextual knowledge. 

Based on $g$, the decoding strategies are differentiated for tokens distinguished by the proposed contextual information-entropy constraint.
For conflicting tokens, model is expected to prioritize contextual knowledge. To facilitate this, $g$ is strategically employed to reinforce context-aware knowledge $p_2$. For non-conflicting tokens, the model is encouraged to lean more heavily on parametric knowledge, rather than depending exclusively on context. This strategy stems from on the recognition of the potential limitations in contextual knowledge, which may not be comprehensive to fully address the query.
Therefore, this paper emphasizes the importance of parametric knowledge $p_1$, while still considering contextual factors. To achieve this, the $g$ is incorporated with it. 
Overall, the contextual information-entropy constraint is utilized with $g$ on the output distribution $\pi$ as:
\begin{align}
&\quad \log\pi(y_t \mid\boldsymbol{x},\boldsymbol{c},\yy_{<t}) \\
&=
\begin{cases}
\log p_1(y_t) +  \alpha \cdot  g(y_t) & \text{if } y_t \in \mathcal{C}(\yy_{<t}), \\
\log p_2(y_t) +  \alpha \cdot g(y_t) & \text{otherwise} .
\end{cases}\nonumber
\end{align}\label{eq:g(t)}
where $\alpha$ is a scaling weight to control the contextual impact.
The final decoding strategy can be formalized as:
\begin{equation}
    y_t \sim \mathrm{softmax}[\log\pi(y_t \mid\boldsymbol{x},\boldsymbol{c},\yy_{<t})] 
\end{equation}
In this way, COIECD strikes a balance between the two sources of knowledge to achieve a more effective and holistic decoding strategy.

\section{Experiments}

\begin{table*}[t]
\tiny
\centering
\resizebox{0.85\textwidth}{!}{
\begin{threeparttable}
\begin{tabular}{ccllllll}
\toprule
& &  \multicolumn{2}{c}{\textbf{LLaMA2-13B}} & \multicolumn{2}{c}{\textbf{OPT-6.7B}} & \multicolumn{2}{c}{\textbf{FLAN-T5-3B}}  \\
\cmidrule(lr){3-4}
\cmidrule(lr){5-6}
\cmidrule(lr){7-8}
\textbf{Datasets} & \textbf{Decoding} & EM & F1 & EM & F1& EM & F1 \\
\midrule
\multirow{5}{*}{ NQ } &  Regular & \cellcolor[gray]{0.9}46.48 &\cellcolor[gray]{0.9}61.51 &\cellcolor[gray]{0.9}19.74 &\cellcolor[gray]{0.9}26.25 &\cellcolor[gray]{0.9}46.00 &\cellcolor[gray]{0.9}62.78 \\
& \text{SC} & 46.66$\ddel{+0.18}$ & 61.76$\ddel{+0.25}$ & 24.24$\ddel{+4.50}$ & 29.78$\ddel{+3.53}$ & 46.14$\ddel{+0.14}$ & 62.51$\db{-0.27}$ \\
& \text{CD} & 46.19$\db{-0.29}$ & 61.97$\ddel{+0.46}$ & 22.90$\ddel{+3.16}$ & 34.48$\ddel{+8.23}$ & 37.62$\db{-8.38}$ & 55.47$\db{-7.31}$ \\
& \text{CAD} & 46.79$\ddel{+0.31}$ & 62.29$\ddel{+0.78}$ & 29.15$\ddel{+9.41}$ & 40.16$\ddel{+13.91}$ & 38.91$\db{-7.09}$ & 57.77$\db{-5.01}$ \\
& \ours & \textbf{47.42}$\ddel{+0.94}$ & \textbf{62.89}$\ddel{+1.38}$ & \textbf{30.07}$\ddel{+10.33}$ & \textbf{40.77}$\ddel{+14.52}$ & \textbf{48.84}$\ddel{+2.84}$ & \textbf{64.45}$\ddel{+1.67}$ \\
\midrule
\multirow{5}{*}{ SQuAD } & Regular &\cellcolor[gray]{0.9}54.46 &\cellcolor[gray]{0.9}68.92 &\cellcolor[gray]{0.9}21.49 &\cellcolor[gray]{0.9}28.50  &\cellcolor[gray]{0.9}71.20 &\cellcolor[gray]{0.9}83.53\\
& \text{SC} & 54.55$\ddel{+0.09}$ & 68.85$\db{-0.07}$ & 23.64$\ddel{+2.15}$ & 30.97$\ddel{+2.47}$ & 70.90$\db{-0.30}$ & 83.28$\db{-0.25}$ \\
& \text{CD} & 53.89$\db{-0.57}$ & 68.04$\db{-0.88}$ & 26.35$\ddel{+4.86}$ & 37.90$\ddel{+9.40}$  & 71.25$\ddel{+0.05}$ & 83.10$\db{-0.43}$\\
& \text{CAD} & 56.46$\ddel{+2.00}$ & 70.52$\ddel{+1.60}$ & 29.46$\ddel{+7.97}$ & 40.31$\ddel{+11.81}$  & 68.62$\db{-2.58}$ & 81.88$\db{-1.65}$\\ 
& \ours & \textbf{57.10}$\ddel{+2.64}$& \textbf{70.86}$\ddel{+1.94}$ & \textbf{29.93}$\ddel{+8.44}$ & \textbf{40.47}$\ddel{+11.97}$ & \textbf{73.84}$\ddel{+2.64}$ & \textbf{84.99}$\ddel{+1.46}$ \\
\midrule
\multirow{5}{*}{ StrategyQA } & Regular & \cellcolor[gray]{0.9}81.09 & \cellcolor[gray]{0.9}81.09 & \cellcolor[gray]{0.9}47.51 & \cellcolor[gray]{0.9}47.51  &\cellcolor[gray]{0.9}87.07 &\cellcolor[gray]{0.9}87.07 \\
& \text{SC} & 81.05$\db{-0.04}$ & 81.05$\db{-0.04}$  & 46.64$\db{-0.87}$ & 46.64$\db{-0.87}$ &  86.81$\db{-0.26}$  & 86.81$\db{-0.26}$ \\
& \text{CD} & 83.58$\ddel{+2.49}$ & 83.58$\ddel{+2.49}$ & 46.99$\db{-0.52}$ & 46.99$\db{-0.52}$ & \textbf{89.34}$\ddel{+2.27}$ & \textbf{89.34}$\ddel{+2.27}$ \\
& \text{CAD} & 85.50$\ddel{+4.41}$ & 85.50$\ddel{+4.41}$ & 53.10$\ddel{+5.59}$ & 53.10$\ddel{+5.59}$ & 88.69$\ddel{+1.62}$ & 88.69$\ddel{+1.62}$ \\ 
& \ours & \textbf{85.76}$\ddel{+4.67}$ &\textbf{85.76}$\ddel{+4.67}$  & \textbf{53.84}$\ddel{+6.33}$ & \textbf{53.84}$\ddel{+6.33}$ & 88.78$\ddel{+1.71}$ & 88.78$\ddel{+1.71}$ \\
\midrule
\midrule
\multirow{5}{*}{ Counterfacts }  & Regular & \cellcolor[gray]{0.9}61.67 &\cellcolor[gray]{0.9}62.63 &\cellcolor[gray]{0.9}18.15 &\cellcolor[gray]{0.9}19.38 &\cellcolor[gray]{0.9}74.56 &\cellcolor[gray]{0.9}75.73 \\
& \text{SC} & 61.76$\ddel{+0.09}$ & 62.76$\ddel{+0.13}$  & 21.40$\ddel{+3.25}$ & 22.62$\ddel{+3.24}$ & 74.58$\ddel{+0.02}$ & 75.64$\db{-0.09}$ \\
& \text{CD} & 67.96$\ddel{+6.29}$& 69.16$\ddel{+6.53}$ & 38.16$\ddel{+20.01}$& 42.78$\ddel{+23.40}$ & 74.76$\ddel{+0.20}$ & 77.30$\ddel{+1.57}$ \\
& \text{CAD} & \textbf{68.76}$\ddel{+7.09}$ & \textbf{71.20}$\ddel{+8.57}$ & \textbf{40.10}$\ddel{+21.95}$ & \textbf{45.29}$\ddel{+25.91}$ & 68.23$\db{-6.33}$ & 74.17$\db{-1.56}$  \\ 
&  \ours & 68.30$\ddel{+6.63}$ & 69.33$\ddel{+6.70}$ & 37.35$\ddel{+19.20}$ & 43.45$\ddel{+24.07}$ & \textbf{77.60}$\ddel{+3.04}$ & \textbf{78.97}$\ddel{+3.24}$  \\
\bottomrule
\end{tabular}
 \begin{tablenotes}
        \footnotesize
        \item[*] We reproduce all baseline methods and report our corresponding results.
 \end{tablenotes}
  \end{threeparttable}
}
\caption{\textbf{Totally, \textit{COIECD} achieves stable optimal performance than baselines.} \textit{Regular}: Regular decoding, \textit{SC}: Self-consistency, \textit{CD}: Contrastive decoding, \textit{CAD}: context-aware decoding. The best scores compared with \textit{Regular} are boldfaced. Numbers within brackets are the discrepancy between \textit{Regular} and current method. The outcomes for models of various sizes are detailed in Table~\ref{tbl:llama}-\ref{tbl:flant5}.
} 
\label{tbl:main}
\end{table*}

\begin{table*}[t]
\tiny
\centering
\resizebox{0.8\textwidth}{!}{
\begin{threeparttable}
\begin{tabular}{cccllllll}
\toprule
& & &  \multicolumn{2}{c}{\textbf{LLaMA2-13B}} & \multicolumn{2}{c}{\textbf{OPT-6.7B}} & \multicolumn{2}{c}{\textbf{FLAN-T5-3B}}  \\
\cmidrule(lr){4-5}
\cmidrule(lr){6-7}
\cmidrule(lr){8-9}
\multicolumn{2}{c}{\textbf{Datasets}} & \textbf{Decoding} & EM & F1 & EM & F1& EM & F1 \\
\midrule
\multirow{10}{*}{ NQ } & \multirow{5}{*}{ Conf. } & Regular &\cellcolor[gray]{0.9}38.45 &\cellcolor[gray]{0.9}54.37 &\cellcolor[gray]{0.9}19.79 &\cellcolor[gray]{0.9}26.24 &\cellcolor[gray]{0.9}45.16 &\cellcolor[gray]{0.9}61.44 \\
& & \text{SC} & 38.65$\ddel{+0.20}$ & 54.64$\ddel{+0.27}$ & 24.26$\ddel{+4.47}$ & 29.75$\ddel{+3.51}$ & 45.22$\ddel{+0.06}$ & 61.02$\db{-0.42}$ \\
& & \text{CD} & 39.64$\ddel{+1.19}$ & 56.50$\ddel{+2.13}$ & 22.96$\ddel{+3.17}$ & 34.54$\ddel{+8.30}$ & 38.29$\db{-6.87}$ & 55.97$\db{-5.47}$ \\
& & \text{CAD} & \textbf{40.53}$\ddel{+2.08}$ & \textbf{57.15}$\ddel{+2.78}$ & 29.21$\ddel{+9.42}$ & 40.19$\ddel{+13.95}$ & 39.34$\db{-5.82}$ & 58.25$\db{-3.19}$ \\
& & \ours & 39.88$\ddel{+1.43}$ & 56.59$\ddel{+2.22}$ & \textbf{30.13}$\ddel{+10.34}$ & \textbf{40.78}$\ddel{+14.54}$ & \textbf{48.36}$\ddel{+3.20}$ & \textbf{63.98}$\ddel{+2.54}$ \\
\cmidrule(lr){2-9}
& \multirow{5}{*}{ \makecell{Non-\\Conf.} } & Regular &\cellcolor[gray]{0.9}73.05 &\cellcolor[gray]{0.9}85.15 & \cellcolor[gray]{0.9}12.40 & \cellcolor[gray]{0.9}27.03 &\cellcolor[gray]{0.9}52.20 &\cellcolor[gray]{0.9}72.65 \\
& & \text{SC} & \textbf{73.16}$\ddel{+0.11}$ & \textbf{85.30}$\ddel{+0.15}$ & \textbf{21.79}$\ddel{+9.39}$ & 34.07$\ddel{+7.04}$& 52.26$\ddel{+0.06}$ & \textbf{73.49}$\ddel{+0.84}$ \\
& & \text{CD} & 67.84$\db{-5.21}$ & 80.06$\db{-5.09}$ & 12.51$\ddel{+0.11}$ & 25.23$\db{-1.80}$ & 33.40$\db{-18.80}$ & 52.33$\db{-20.32}$ \\
& & \text{CAD} & 67.50$\db{-5.55}$ & 79.19$\db{-5.96}$ & 20.83$\ddel{+8.43}$ & \textbf{36.05}$\ddel{+9.02}$ & 35.68$\db{-16.52}$ & 54.22$\db{-18.43}$ \\ 
& & \ours & 72.37$\db{-0.68}$ & 83.75$\db{-1.40}$ & 19.01$\ddel{+6.61}$ & 35.82$\ddel{+8.79}$ & \textbf{52.42}$\ddel{+0.22}$ & 67.87$\db{-4.78}$ \\
\midrule
\multirow{10}{*}{ SQuAD } & \multirow{5}{*}{ Conf. } & Regular & \cellcolor[gray]{0.9}48.78 & \cellcolor[gray]{0.9}64.34 & \cellcolor[gray]{0.9}21.49 & \cellcolor[gray]{0.9}28.50 &\cellcolor[gray]{0.9}70.51 &  \cellcolor[gray]{0.9}83.09\\
& & \text{SC} & 48.87$\ddel{+0.09}$ & 64.24$\db{-0.10}$ & 23.14$\ddel{+1.65}$ & 30.18$\ddel{+1.68}$ & 70.25$\db{-0.26}$ & 82.84$\db{-0.25}$ \\
& & \text{CD} & 50.68$\ddel{+1.90}$ & 66.01$\ddel{+1.67}$ & 26.33$\ddel{+4.84}$ & 37.61$\ddel{+9.11}$ & 71.31$\ddel{+0.80}$ & 83.17$\ddel{+0.08}$ \\
& & \text{CAD} & 51.64$\ddel{+2.86}$ & \textbf{67.09}$\ddel{+2.75}$ & 29.32$\ddel{+7.83}$ & 39.97$\ddel{+11.47}$ & 68.64$\db{-1.87}$ & 81.92$\db{-1.17}$ \\ 
& & \ours & \textbf{51.95}$\ddel{+3.17}$ & 66.91$\ddel{+2.57}$& \textbf{29.78}$\ddel{+8.29}$ & \textbf{40.13}$\ddel{+11.63}$ & \textbf{73.51}$\ddel{+3.00}$ & \textbf{84.76}$\ddel{+1.67}$ \\
\cmidrule(lr){2-9}
& \multirow{5}{*}{ \makecell{Non-\\Conf.} } & Regular & \cellcolor[gray]{0.9}80.50 & \cellcolor[gray]{0.9}89.88 & \cellcolor[gray]{0.9}35.62 & \cellcolor[gray]{0.9}57.10 &\cellcolor[gray]{0.9}79.56 &\cellcolor[gray]{0.9}88.81 \\
& & \text{SC} & 80.57$\ddel{+0.07}$ & \textbf{89.96}$\ddel{+0.08}$ & \textbf{36.59}$\ddel{+0.97}$ & \textbf{56.04}$\db{-1.06}$ & \textbf{78.67}$\db{-0.89}$ & \textbf{88.61}$\db{-0.20}$ \\
& & \text{CD} & 68.64$\db{-11.86}$ & 77.35$\db{-12.53}$ & 26.90$\db{-8.72}$ & 49.05$\db{-8.05}$ & 70.60$\db{-8.96}$ & 82.26$\db{-6.55}$ \\
& & \text{CAD} & 78.53$\db{-1.97}$ & 86.19$\db{-3.69}$ & 34.93$\db{-0.69}$ & 53.44$\db{-3.66}$ & 68.37$\db{-11.19}$ & 81.42$\db{-7.39}$ \\
& & \ours & \textbf{80.69}$\ddel{+0.19}$ & 88.93$\db{-0.95}$ & 35.69$\ddel{+0.07}$ & 53.71$\db{-3.39}$ & 77.78$\db{-1.78}$ & 87.76$\db{-1.05}$ \\
\midrule
\multirow{10}{*}{ StrategyQA } 
& \multirow{5}{*}{ Conf. } & Regular & \cellcolor[gray]{0.9}57.36 & \cellcolor[gray]{0.9}57.36 & \cellcolor[gray]{0.9}47.86 & \cellcolor[gray]{0.9}47.86 &\cellcolor[gray]{0.9}69.41 &\cellcolor[gray]{0.9}69.41 \\
& & \text{SC} & 57.59$\ddel{+0.23}$ & 57.59$\ddel{+0.23}$ & 47.03$\db{-0.83}$ & 47.03$\db{-0.83}$ & 68.80$\db{-0.61}$ & 68.80$\db{-0.61}$ \\
& & \text{CD} & \textbf{81.15}$\ddel{+23.79}$ & \textbf{81.15}$\ddel{+23.79}$  & 47.26$\db{-0.60}$ & 47.26$\db{-0.60}$ & \textbf{85.96}$\ddel{+16.55}$ & \textbf{85.96}$\ddel{+16.55}$  \\
& & \text{CAD} & 77.31$\ddel{+19.95}$ & 77.31$\ddel{+19.95}$ & 54.21$\ddel{+6.35}$ & 54.21$\ddel{+6.35}$ & 77.39$\ddel{+7.98}$ & 77.39$\ddel{+7.98}$  \\
& & \ours & 80.29$\ddel{+22.93}$ & 80.29$\ddel{+22.93}$ & \textbf{54.90}$\ddel{+7.04}$ & \textbf{54.90}$\ddel{+7.04}$ & 77.36$\ddel{+7.95}$ & 77.36$\ddel{+7.95}$  \\
\cmidrule(lr){2-9}
& \multirow{5}{*}{ \makecell{Non-\\Conf.} } & Regular & \cellcolor[gray]{0.9}96.54 & \cellcolor[gray]{0.9}96.54 & \cellcolor[gray]{0.9}40.87 & \cellcolor[gray]{0.9}40.87 &\cellcolor[gray]{0.9}97.06 &\cellcolor[gray]{0.9}97.06 \\
& & \text{SC} & \textbf{96.47}$\db{-0.07}$ & \textbf{96.47}$\db{-0.07}$ & 39.13$\db{-1.74}$ & 39.13$\db{-1.74}$ & \textbf{96.69}$\db{-0.07}$ & \textbf{96.69}$\db{-0.07}$  \\
& & \text{CD} & 85.16$\db{-11.38}$ & 85.16$\db{-11.38}$ & \textbf{41.71}$\ddel{+0.84}$ & \textbf{41.71}$\ddel{+0.84}$ & 91.26$\db{-5.80}$ & 91.26$\db{-5.80}$ \\
& & \text{CAD} & 89.33$\db{-7.21}$ & 89.33$\db{-7.21}$ & 32.17$\db{-8.70}$ & 32.17$\db{-8.70}$ & 95.08$\db{-1.98}$ & 95.08$\db{-1.98}$ \\
& & \ours & 90.80$\db{-5.74}$  & 90.80$\db{-5.74}$ & 33.91$\db{-6.96}$ & 33.91$\db{-6.96}$& 95.22$\db{-1.84}$ & 95.22$\db{-1.84}$ \\
\bottomrule
\end{tabular}
  \end{threeparttable}
}
\caption{We use the posteriori judgement of the parametric knowledge in LLMs~\cite{DBLP:conf/emnlp/WangLSL23} to identify and analyze conflicts within the datasets. On Non-Conf. data, \textit{COIECD} consistently outperforms other conflict-solving methods in terms of both EM and F1, and outperforms the \textit{Regular} and \textit{SC} on Conf. data.} 
\label{tbl:sub-main}
\end{table*}

\subsection{Experimental Setup}
\paragraph{Datasets.}  
We experiment with several public QA datasets, including NaturalQuestions~\cite{kwiatkowski2019natural}, SQuAD 1.1~\cite{squad} and StrategyQA~\cite{geva2021did}.
Unlike prior research where all data consists of synthetic conflicts, we adopt the original datasets and view them as hybrid datasets consisting of both conflicting (Conf.) and non-conflicting (Non-Conf.) data. It can stimulate the unpredictability of conflict occurrences in a realistic setting. 
Then we adopt the posteriori judgement of the parametric knowledge in LLMs \cite{DBLP:conf/emnlp/WangLSL23} to identify the knowledge conflicts within the datasets in Sec~\ref{sec:conf-nonconf}. 

Furthermore, we also incorporate the Counterfacts dataset~\cite{LongprePCRD021} to facilitate a more comprehensive analysis. 
Counterfacts exclusively consists of synthetic conflicting data, where all the original answers are replaced with other plausible entities in the contexts. The brief introductions and statistic for each dataset are provided in Appendix~\ref{appendix:data}.
We apply the prompt instruction following~\citet{ren2023investigating} to assess the QA abilities for all models. 
 
\paragraph{Used LLMs.} Our experiments are conducted on pre-trained language models, including auto-regressive models: the LLaMA2 models (7B, 13B parameters) \cite{touvron2023llama}, OPT models (6.7B, 13B parameters) \cite{zhang2022opt} and the encoder-decoder language model: FLAN-T5 (3B, 11B parameters) \cite{chung2022scaling}. 
The experimental results feature a representative outcome for a single size in each model. Additional results, including a comparative analysis of GPT-3.5 and GPT-4 performance on these datasets, are detailed in Appendix~\ref{appendix:more}.

\paragraph{Baselines.} \label{baselines}
We adopt four decoding methods as baselines: Regular Decoding, Self-Consistency (SC)~\cite{self-consistency}, Contrastive Decoding (CD)~\cite{cd} and Context-Aware Decoding (CAD)~\cite{cad}.\footnote{For the issue of knowledge conflicts, CD adopts the object of difference between the output likelihood when inputs are presented with and without context. More detailed comparisons of those methods are described in the Appendix~\ref{appendix:methods}.} CD and CAD are specialized in resolving knowledge conflicts, while SC is a general decoding strategy to strengthen the model performance.
Regular Decoding employs a standard, greedy strategy, integrating both question and context as inputs. For SC, which necessitates multiple samples per question, 40 outputs are sampled with temperature $t=0.5$, in according with~\citet{self-consistency}. For other methods, the temperature $t=0$ following prior works. All the decoding methods are evaluated in a zero-shot setting. The values of $\lambda$ and $\alpha$ are set to 0.25 and 1, respectively. Detailed analyses sampling strategies are provided in Appendices~\ref{appendix:strategy}.


\paragraph{Metrics.} Following previous works \cite{ChenFWB17, IzacardG21, sun2023beamsearchqa}, we use the Exact Match (EM) and F1 scores for evaluating the QA performance of LLMs. For the binary classification in StrategyQA, the accuracy is used as the metric.

\subsection{Overall Performance} \label{exp.1}
Table \ref{tbl:main} presents the results on the QA datasets. 
Totally, COIECD exhibits consistent improvements over all baseline comparisons.
The SC method yields results akin to the Regular with a slight increase.
The performance of conflict-solving methods, namely CD and CAD, varies across models and datasets, showing inconsistent variations when compared to Regular.
On the contrary, COIECD consistently achieves improvements in realistic datasets (NQ, SQuAD and StrategyQA) and maintains competitive performance in the synthetic Counterfacts dataset.
The results conclusively demonstrate the consistent effectiveness and adaptability of COIECD across various datasets in different conflict scenarios. 

The results on the Counterfacts dataset reveal that most methods exhibit performance enhancement. Upon closer examination, it becomes evident that the CAD's advantages are primarily evident in counterfactual scenarios, outperforming other methods except FLAN-T5.
Nonetheless, COIECD still demonstrates superior robustness, maintaining competitive performance across various models.
\subsection{Performance on Conf. \& Non-Conf. data.} \label{sec:conf-nonconf}

As shown in the Table~\ref{tbl:sub-main}, since the CD and CAD specialize in resolving knowledge conflicts, they can handle the Conf. data well. However, in the Non-Conf. dataset, both of them demonstrate a significant decrease in performance, with reductions reaching up to -11.86 EM score on the SQuAD dataset.
This finding highlights the inherent limitations of these methods, especially in scenarios with high knowledge consistency, where their application is particularly challenging. 

The Regular shows the least efficacy in handling Conf. data compared to Non-Conf. data\footnote{This observation does not always apply to the OPT model. This limitation is attributed to the inherent scarcity of parametric knowledge within the model. (See Appendix~\ref{appendix:distribution})}, falling by nearly 50\% on LLaMA2 model.
This observation aligns with previous research, indicating that larger models are more prone to disregard context when it conflicts with the model's parametric knowledge. 
Moreover, SC adopts the voting strategy from multiple generations. It naturally has better results on Non-Conf., but could not deal with the conflicts in Conf. By contrast, the proposed COIECD comprehensively considers the conflicts and non-conflicts between the given contexts and LLMs. As a results, it obtains the best performance on Total. And it also has better results than CD and CAD whatever on Non-Conf., and Conf. in most datasets.

In summary, whether it's SC, CD, or CAD, each is made for either Conf. or Non-Conf. scenarios, achieving comparatively better outcomes in one scenario while inevitably performing poorly in the other. 
In contrast, our adaptive decoding method considers both scenarios, achieving a trade-off that works well in all datasets. 

\subsection{Performance with Different Conflicting Data Proportions}
We conduct further experiments aiming to understand how the presence of conflicts within data affects the performance of these methods, measured in terms of EM score. We establish two experimental scenarios: a real-world conflicting scenario composed of samples from conflicting and non-conflicting data in the NQ dataset, and a synthetic conflicting scenario sampled by the same non-conflicting data and the synthetic Counterfacts constructed on the NQ.
As shown in Figure~\ref{fig:line1} and~\ref{fig:line2}, 
we visualize the correlation among the proportion of conflicting data and the performance of different methods in two scenarios.

\begin{figure} [t]
  \begin{minipage}[t]{0.42\linewidth}
  \begin{flushleft}
    \includegraphics[scale=0.2]{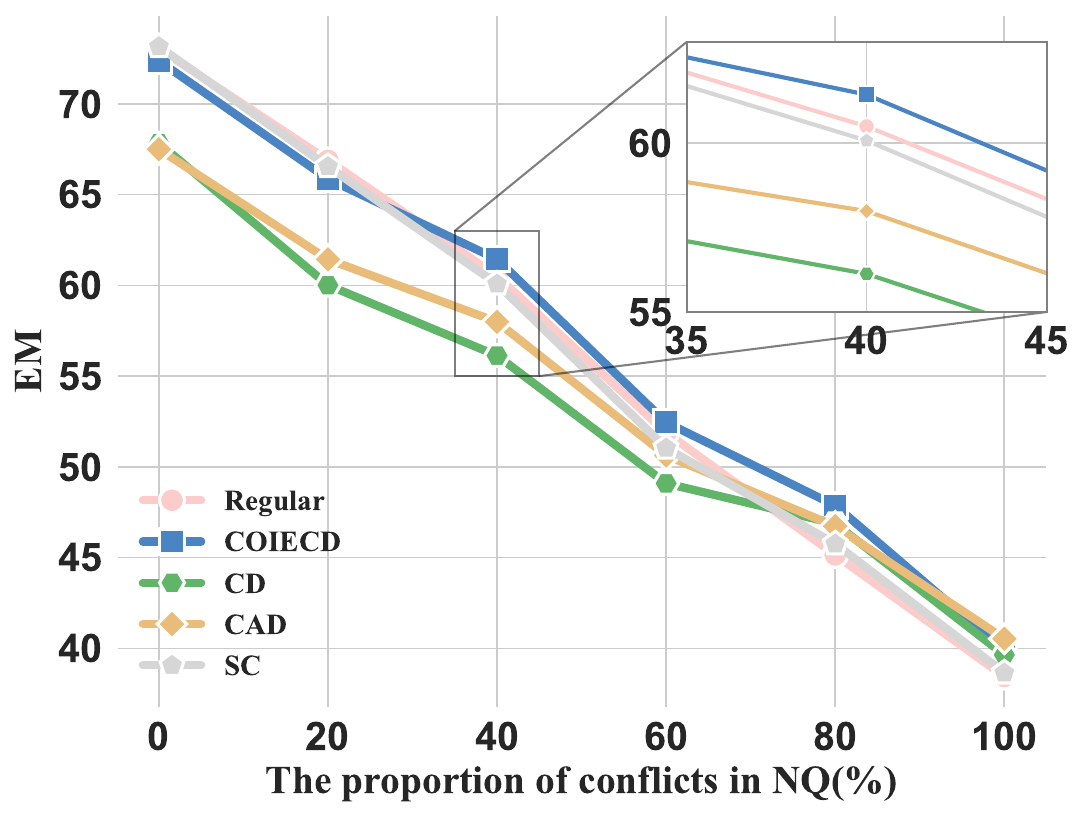}
    \captionsetup{font=small}
    \caption{Realistic conflicts with Conf. data on NQ}
    \label{fig:line1}
\end{flushleft}
  \end{minipage}
  \hspace{4mm}
  \begin{minipage}[t]{0.42\linewidth}
   \begin{flushright}
    \includegraphics[scale=0.25]{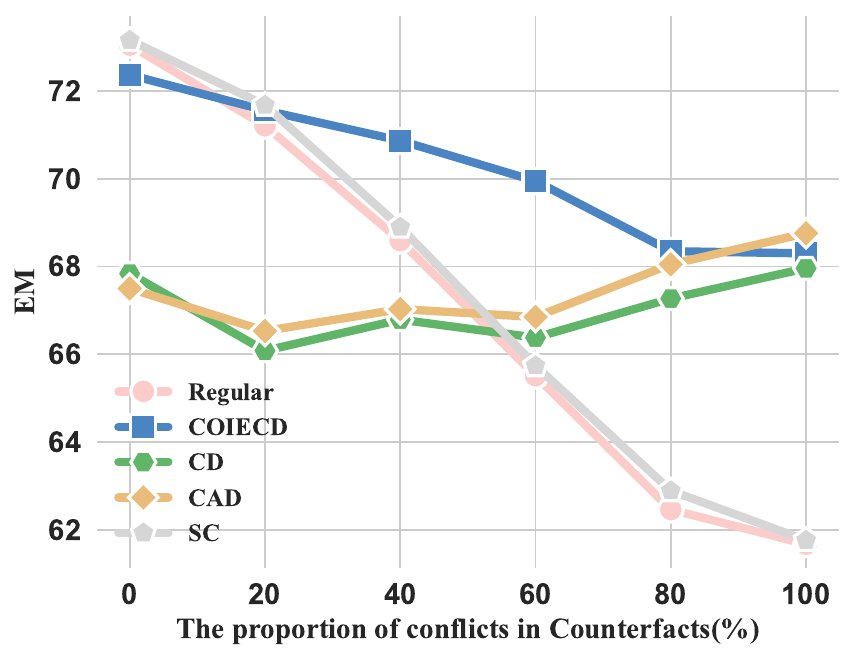}
    \captionsetup{font=small}
    \caption{Synthetic conflicts with Counterfacts}
    \label{fig:line2}
\end{flushright}
  \end{minipage}
\end{figure}
\paragraph{Performance degradation across conflict proportions.}
Both figures reveal a universal trend of performance deterioration for Regular as the conflict proportion escalates. 
The Regular and SC show the highest initial EM score at 0\% conflicts but also demonstrate the most significant decline as conflicts increase. It suggests that those methods heavily rely on knowledge consistency with parameters and contexts.
CAD exhibits the lowest performance across all levels of conflict, indicating that it may be specifically designed for datasets with maximal conflicts.
The performance of COIECD declines at the slowest rate, suggesting it has the capability that mitigate the impact of conflicting data. Overall, COIECD appears to be more robust to conflicts compared to others. 

\paragraph{Gap between realistic and synthetic scenarios.}
Upon closer inspection of Figure~\ref{fig:line1} and \ref{fig:line2}, we find that the performances of CAD and CD exhibit substantial variation with the increase of conflicts. 
In the synthetic scenario, they fall below that of Regular by a large margin when no conflict occurs, but rise gradually with increase of the proportion of knowledge conflicts. This trend does not exist in the realistic data. 
Furthermore, the impact of conflicts on EM is more pronounced in the realistic scenario. This might be due to the nature of realistic conflicts being more challenging or nuanced compared to the synthetic ones.
In conclusion, the capability of the decoding method cannot be only verified by the performance on the single counterfactual data.
To address a more realistic scenario, the COIECD method emerges as the optimal choice.
\begin{table}[t]
  \scriptsize
  \centering \ra{1.2} \scalebox{0.75}{
  \begin{threeparttable}
  \begin{tabular}{@{}p{0.14\textwidth}ccccccc@{}}
  
  \toprule
  \textbf{DPR} & \textbf{Regular} & \textbf{SC} & \textbf{CD} & \textbf{CAD} & \textbf{COIECD} \\
  \midrule
  w/o reranker$\dagger$ & & & & & \\
  \hdashline
  \quad EM & \cellcolor[gray]{0.9}16.80 & \cellcolor{green!20}16.74 & \cellcolor{green!20}15.97 &\cellcolor{green!20}16.23 &\cellcolor{red!20}\textbf{16.84}\\
  \quad F1  & \cellcolor[gray]{0.9}\textbf{22.93} & \cellcolor{green!20}22.75  & \cellcolor{green!20}22.05 & \cellcolor{green!20}22.14 & \cellcolor{green!20}22.88  \\ 
  \midrule
  \scriptsize{w/ oracle reranker} & & & & & \\
  \hdashline
  \quad EM & \cellcolor[gray]{0.9}34.92 & \cellcolor{red!20}35.24 & \cellcolor{green!20}34.20 &\cellcolor{green!20}34.10 &\cellcolor{red!20}\textbf{35.82}\\
  \quad F1  & \cellcolor[gray]{0.9}43.35 & \cellcolor{green!20}43.23  & \cellcolor{green!20}43.49 & \cellcolor{green!20}43.27 & \cellcolor{red!20}\textbf{44.48}  \\
  \bottomrule
  \end{tabular}
   \begin{tablenotes}
        \item[$\dagger$] The accuracy of Hits@1 w/o reranker is 0.46.
 \end{tablenotes}
 \end{threeparttable}
}
  \caption{Performance evaluation with DPR on Conf. data of NQ Open. The red cell indicates superior performance than the Regular decoding, and green denotes degeneration}
  \label{tab:rag}
\end{table}
\subsection{Performance on Noisy Contexts}
In this paper, the input contexts are regarded as high quality and containing the answer following the settings in~\cite{LongprePCRD021,prompting, cad,Wangyike}. However, in real-world models like retrieval-augmented language models (RALMs), contextual knowledge can be of low quality or noisy. Therefore, we also incorporate a prominent retrieval system DPR~\cite{DBLP:conf/emnlp/KarpukhinOMLWEC20} into our research on the NQ Open dataset\footnote{We use a single document as the context input, which is top-scored passage retrieved by DPR from WikiText-103.}.
One primary objectives of the RALM method is to ascertain whether a given question necessitates retrieval augmentation~\cite{POPQA,DBLP:conf/emnlp/JiangXGSLDYCN23}, which drives far from the our focus. Therefore, we conduct experiments amidst the conflicting data, where the model lacks the requisite knowledge to formulate an accurate response and necessitates retrieval augmentation.

In Table~\ref{tab:rag}, the 'w/o reranker' means the presence of noise, whereas the 'oracle reranker' has the capability to filter out all the noise. It is evident that the noise in the context significantly impacts both the CAD and CD, resulting in performances considerably lower than Regular. SC still displays the performance comparable to Regular.
In contrast, COIECD maintains a marginal superiority over Regular, a distinction that becomes more pronounced when the retriever is coupled with an oracle reranker.
Moreover, in real-world scenarios characterized by potentially noisy contexts, we posit the challenge of mitigating noise presents a unique research concern, particularly focusing on other components of RALMs, such as retrievers and rerankers. 
Enhanced reranking of external context is observed to correlate with improved performance in the COIECD. Notably, our approach still demonstrates robustness even in the absence of reranker.

\begin{table}[t]
  \small
  \centering \ra{1.2} \scalebox{0.78}{
  \begin{tabular}{@{}p{0.14\textwidth}ccccccc@{}}
  \toprule
  & \multicolumn{2}{c}{ \makecell{\textbf{Nucleus}\\($p=0.9$)} } & \multicolumn{2}{c}{ \makecell{\textbf{Top-k}\\($k=50$) }}& \multicolumn{2}{c}{ \makecell{\textbf{Typical}\\($\tau=0.9$)}}\\
  \textbf{Decoding} & EM & F1& EM & F1& EM & F1& \\
  \midrule
  COIECD  & 46.19 & 62.13  & 46.16 & 61.87 & 46.74 & 62.03  \\ 
  \quad-w/o upper  & 46.08 & 61.87 & 45.06 & 61.13 & 46.24 & 61.91  \\
  \quad-w/o lower  & 44.11 & 60.22 & 41.93 & 59.05 & 44.77 & 60.49\\
  \hdashline
  Regular  & 43.80 & 59.75 & 41.64 & 58.27 & 44.32 & 60.24 \\
  \bottomrule
  \end{tabular}}
  \caption{Performance evaluation for the ablation studies of single-side constraint on NQ dataset.}
  \label{tab:ablation}
\end{table}
\subsection{Analyses on Contextual Information-Entropy Constraint} \label{exp.3}

In this section, we delve into the criticality of the contextual information-entropy constraints within the COIECD model, specifically focusing on the impacts of the lower and upper bounds in various stochastic sampling decoding contexts. Table \ref{tab:ablation} presents the experimental results on the NQ dataset with LLaMA2-13B model.

We observe that the exclusion of the lower bound leads to a discernible decrement in both EM and F1 scores across diverse decoding strategies. It demonstrates the pivotal role of the lower bound in improving the faithfulness to the conflicting contexts. 
Although the upper bound is crucial for limiting the inclusion of low-probability, potentially irrelevant tokens, the lower bound's contribution to steering the model distribution towards more context-faithful tokens is more pronounced. Furthermore, a detailed case study is presented in Appendix~\ref{appendix:case}.

Notably, COIECD consistently surpasses the performance of Regular. This superiority is sustained even in scenarios where one of the bounds is omitted, highlighting the overall effectiveness and robustness of the COIECD. 

\begin{figure} [t]
  \begin{minipage}[t]{0.42\linewidth}
  \begin{flushleft}
    \includegraphics[scale=0.23]{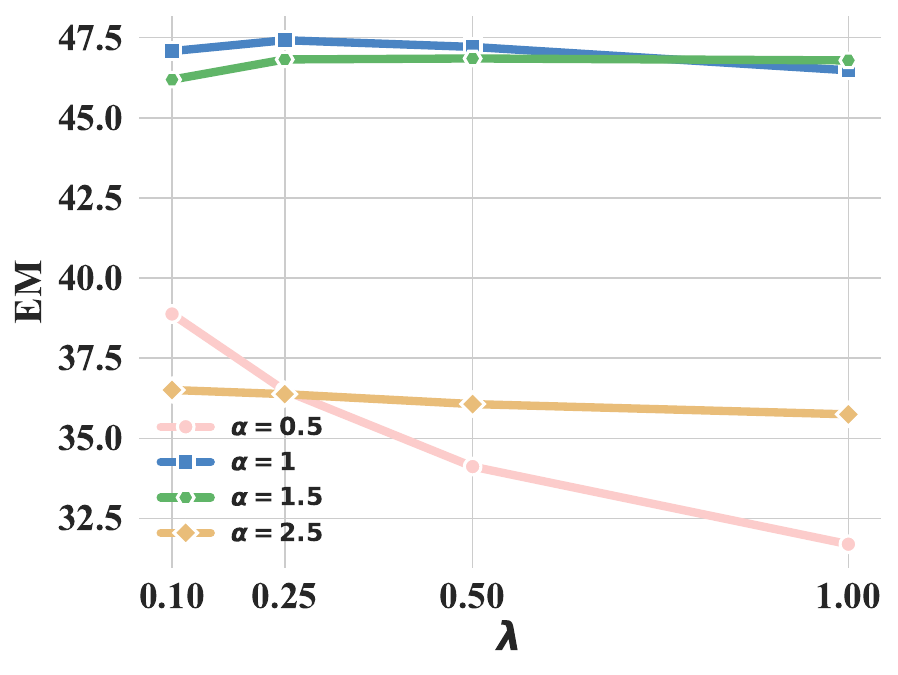}
    \captionsetup{font=small}
    \caption{EM score on LLaMA2-\textbf{13B} Model}
    \label{fig:13B}
\end{flushleft}
  \end{minipage}
  \hspace{4mm}
  \begin{minipage}[t]{0.42\linewidth}
   \begin{flushright}
    \includegraphics[scale=0.23]{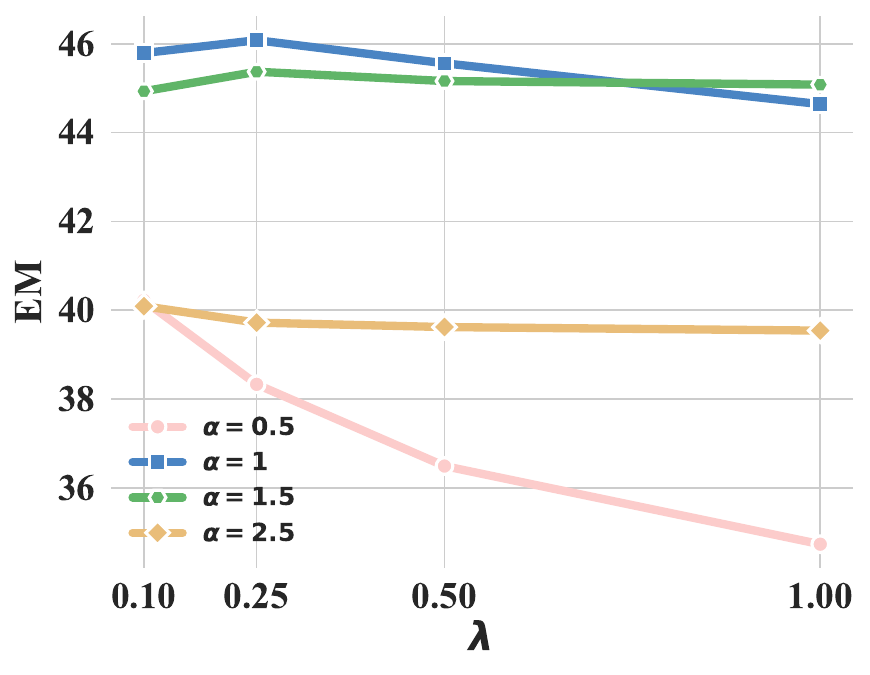}
    \captionsetup{font=small}
    \caption{EM score on LLaMA2-\textbf{7B} Model}
    \label{fig:7B}
\end{flushright}
  \end{minipage}
\end{figure}


\subsection{Discussion on Hyperparameters} \label{exp:hyper}
As illustrated in Figure~\ref{fig:13B} and \ref{fig:7B}, we conduct experiments with various values of $\lambda$ and $\alpha$ on NQ dataset. We find $\lambda=0.25$ and $\alpha=1$ consistently provide robust improvements over Regular decoding. Therefore, we adopt this hyperparameter configuration across all experiments.

Furthermore, we evaluate the model performance under the setting of $\alpha=0$ as simply providing not providing the context during decoding in Appendix~\ref{appendix:hyper}, which highlight the significance of adding $g(y_t)$. The detailed results are evaluated with EM and F1 metrics in Table~\ref{table:em-13b} - \ref{table:f1-7b}.

\section{Conclusion}
The COIECD method is introduced to discern and resolve knowledge conflicts effectively. This method is evaluated on context-relevant QA tasks using both realistic and synthetic datasets. The findings indicate that COIECD maintains consistently high performance, irrespective of the presence or absence of knowledge conflicts within the data.

\section{Limitations}
\begin{itemize}
    \item We only evaluate our decoding method on the tasks of QA. It would be interesting to apply our method to other context-intensive NLP tasks such as summarization~\cite{DBLP:conf/acl/MaynezNBM20, DBLP:conf/naacl/PagnoniBT21}.
    \item Similar to the limitations of CD and CAD, our method also requires twice the computational resources due to the necessity of performing two decoding operations, thus resulting in a cost equivalent to double that of Regular decoding.
    \item Given the shorter length of answers in QA tasks, our approach omits the entropy smoothing calculation within the constraint during the decoding process. This step is generally incorporated in open-ended text generation tasks, aligning with the stable entropy theory described by \citet{stableentropy}. Although this adaptation is practical for QA, we recognize it as a limitation and propose it as an area for future research.
\end{itemize}

\section*{Acknowledgments}
This work was supported by the National Key R\&D Program of China (No.2022ZD0160503) and Yunnan Provincial Major Science and Technology Special Plan Projects (No. 202202AD080004).
\bibliography{custom}

\begin{thebibliography}{45}
\expandafter\ifx\csname natexlab\endcsname\relax\def\natexlab#1{#1}\fi

\bibitem[{Arora et~al.(2023)Arora, O'Donnell, Precup, Weston, and Cheung}]{stableentropy}
Kushal Arora, Timothy~J. O'Donnell, Doina Precup, Jason Weston, and Jackie Chi~Kit Cheung. 2023.
\newblock \href {https://doi.org/10.48550/ARXIV.2302.06784} {The stable entropy hypothesis and entropy-aware decoding: An analysis and algorithm for robust natural language generation}.
\newblock \emph{CoRR}, abs/2302.06784.

\bibitem[{Brown et~al.(2020)Brown, Mann, Ryder, Subbiah, Kaplan, Dhariwal, Neelakantan, Shyam, Sastry, Askell et~al.}]{brown2020language}
Tom Brown, Benjamin Mann, Nick Ryder, Melanie Subbiah, Jared~D Kaplan, Prafulla Dhariwal, Arvind Neelakantan, Pranav Shyam, Girish Sastry, Amanda Askell, et~al. 2020.
\newblock Language models are few-shot learners.
\newblock \emph{Advances in neural information processing systems}, 33:1877--1901.

\bibitem[{Chen et~al.(2017)Chen, Fisch, Weston, and Bordes}]{ChenFWB17}
Danqi Chen, Adam Fisch, Jason Weston, and Antoine Bordes. 2017.
\newblock \href {https://doi.org/10.18653/V1/P17-1171} {Reading wikipedia to answer open-domain questions}.
\newblock In \emph{Proceedings of the 55th Annual Meeting of the Association for Computational Linguistics, {ACL} 2017, Vancouver, Canada, July 30 - August 4, Volume 1: Long Papers}, pages 1870--1879. Association for Computational Linguistics.

\bibitem[{Chen et~al.(2022)Chen, Zhang, and Choi}]{ChenZC22}
Hung{-}Ting Chen, Michael J.~Q. Zhang, and Eunsol Choi. 2022.
\newblock \href {https://doi.org/10.18653/V1/2022.EMNLP-MAIN.146} {Rich knowledge sources bring complex knowledge conflicts: Recalibrating models to reflect conflicting evidence}.
\newblock In \emph{Proceedings of the 2022 Conference on Empirical Methods in Natural Language Processing, {EMNLP} 2022, Abu Dhabi, United Arab Emirates, December 7-11, 2022}, pages 2292--2307. Association for Computational Linguistics.

\bibitem[{Chuang et~al.(2023)Chuang, Xie, Luo, Kim, Glass, and He}]{dola}
Yung{-}Sung Chuang, Yujia Xie, Hongyin Luo, Yoon Kim, James~R. Glass, and Pengcheng He. 2023.
\newblock \href {https://doi.org/10.48550/ARXIV.2309.03883} {Dola: Decoding by contrasting layers improves factuality in large language models}.
\newblock \emph{CoRR}, abs/2309.03883.

\bibitem[{Chung et~al.(2022)Chung, Hou, Longpre, Zoph, Tay, Fedus, Li, Wang, Dehghani, Brahma et~al.}]{chung2022scaling}
Hyung~Won Chung, Le~Hou, Shayne Longpre, Barret Zoph, Yi~Tay, William Fedus, Yunxuan Li, Xuezhi Wang, Mostafa Dehghani, Siddhartha Brahma, et~al. 2022.
\newblock Scaling instruction-finetuned language models.
\newblock \emph{arXiv preprint arXiv:2210.11416}.

\bibitem[{Devlin et~al.(2018)Devlin, Chang, Lee, and Toutanova}]{devlin2018bert}
Jacob Devlin, Ming-Wei Chang, Kenton Lee, and Kristina Toutanova. 2018.
\newblock Bert: Pre-training of deep bidirectional transformers for language understanding.
\newblock \emph{arXiv preprint arXiv:1810.04805}.

\bibitem[{Dong et~al.(2023)Dong, Yuan, Lu, Li, Xue, Liu, Wang, Yuan, Zhou, and Zhou}]{DBLP:journals/corr/abs-2310-05492}
Guanting Dong, Hongyi Yuan, Keming Lu, Chengpeng Li, Mingfeng Xue, Dayiheng Liu, Wei Wang, Zheng Yuan, Chang Zhou, and Jingren Zhou. 2023.
\newblock \href {https://doi.org/10.48550/ARXIV.2310.05492} {How abilities in large language models are affected by supervised fine-tuning data composition}.
\newblock \emph{CoRR}, abs/2310.05492.

\bibitem[{Fan et~al.(2018)Fan, Lewis, and Dauphin}]{LewisDF18}
Angela Fan, Mike Lewis, and Yann~N. Dauphin. 2018.
\newblock \href {https://doi.org/10.18653/V1/P18-1082} {Hierarchical neural story generation}.
\newblock In \emph{Proceedings of the 56th Annual Meeting of the Association for Computational Linguistics, {ACL} 2018, Melbourne, Australia, July 15-20, 2018, Volume 1: Long Papers}, pages 889--898. Association for Computational Linguistics.

\bibitem[{Geva et~al.(2021{\natexlab{a}})Geva, Khashabi, Segal, Khot, Roth, and Berant}]{geva2021did}
Mor Geva, Daniel Khashabi, Elad Segal, Tushar Khot, Dan Roth, and Jonathan Berant. 2021{\natexlab{a}}.
\newblock Did aristotle use a laptop? a question answering benchmark with implicit reasoning strategies.
\newblock \emph{Transactions of the Association for Computational Linguistics}, 9:346--361.

\bibitem[{Geva et~al.(2021{\natexlab{b}})Geva, Schuster, Berant, and Levy}]{GevaSBL21}
Mor Geva, Roei Schuster, Jonathan Berant, and Omer Levy. 2021{\natexlab{b}}.
\newblock \href {https://doi.org/10.18653/V1/2021.EMNLP-MAIN.446} {Transformer feed-forward layers are key-value memories}.
\newblock In \emph{Proceedings of the 2021 Conference on Empirical Methods in Natural Language Processing, {EMNLP} 2021, Virtual Event / Punta Cana, Dominican Republic, 7-11 November, 2021}, pages 5484--5495. Association for Computational Linguistics.

\bibitem[{Holtzman et~al.(2019)Holtzman, Buys, Du, Forbes, and Choi}]{holtzman2019curious}
Ari Holtzman, Jan Buys, Li~Du, Maxwell Forbes, and Yejin Choi. 2019.
\newblock The curious case of neural text degeneration.
\newblock \emph{arXiv preprint arXiv:1904.09751}.

\bibitem[{Izacard and Grave(2021)}]{IzacardG21}
Gautier Izacard and Edouard Grave. 2021.
\newblock \href {https://doi.org/10.18653/V1/2021.EACL-MAIN.74} {Leveraging passage retrieval with generative models for open domain question answering}.
\newblock In \emph{Proceedings of the 16th Conference of the European Chapter of the Association for Computational Linguistics: Main Volume, {EACL} 2021, Online, April 19 - 23, 2021}, pages 874--880. Association for Computational Linguistics.

\bibitem[{Jiang et~al.(2023)Jiang, Xu, Gao, Sun, Liu, Dwivedi{-}Yu, Yang, Callan, and Neubig}]{DBLP:conf/emnlp/JiangXGSLDYCN23}
Zhengbao Jiang, Frank~F. Xu, Luyu Gao, Zhiqing Sun, Qian Liu, Jane Dwivedi{-}Yu, Yiming Yang, Jamie Callan, and Graham Neubig. 2023.
\newblock \href {https://aclanthology.org/2023.emnlp-main.495} {Active retrieval augmented generation}.
\newblock In \emph{Proceedings of the 2023 Conference on Empirical Methods in Natural Language Processing, {EMNLP} 2023, Singapore, December 6-10, 2023}, pages 7969--7992. Association for Computational Linguistics.

\bibitem[{Kadavath et~al.(2022)Kadavath, Conerly, Askell, Henighan, Drain, Perez, Schiefer, Hatfield{-}Dodds, DasSarma, Tran{-}Johnson, Johnston, Showk, Jones, Elhage, Hume, Chen, Bai, Bowman, Fort, Ganguli, Hernandez, Jacobson, Kernion, Kravec, Lovitt, Ndousse, Olsson, Ringer, Amodei, Brown, Clark, Joseph, Mann, McCandlish, Olah, and Kaplan}]{Kadavath}
Saurav Kadavath, Tom Conerly, Amanda Askell, Tom Henighan, Dawn Drain, Ethan Perez, Nicholas Schiefer, Zac Hatfield{-}Dodds, Nova DasSarma, Eli Tran{-}Johnson, Scott Johnston, Sheer~El Showk, Andy Jones, Nelson Elhage, Tristan Hume, Anna Chen, Yuntao Bai, Sam Bowman, Stanislav Fort, Deep Ganguli, Danny Hernandez, Josh Jacobson, Jackson Kernion, Shauna Kravec, Liane Lovitt, Kamal Ndousse, Catherine Olsson, Sam Ringer, Dario Amodei, Tom Brown, Jack Clark, Nicholas Joseph, Ben Mann, Sam McCandlish, Chris Olah, and Jared Kaplan. 2022.
\newblock \href {https://doi.org/10.48550/ARXIV.2207.05221} {Language models (mostly) know what they know}.
\newblock \emph{CoRR}, abs/2207.05221.

\bibitem[{Karpukhin et~al.(2020)Karpukhin, Oguz, Min, Lewis, Wu, Edunov, Chen, and Yih}]{DBLP:conf/emnlp/KarpukhinOMLWEC20}
Vladimir Karpukhin, Barlas Oguz, Sewon Min, Patrick S.~H. Lewis, Ledell Wu, Sergey Edunov, Danqi Chen, and Wen{-}tau Yih. 2020.
\newblock \href {https://doi.org/10.18653/V1/2020.EMNLP-MAIN.550} {Dense passage retrieval for open-domain question answering}.
\newblock In \emph{Proceedings of the 2020 Conference on Empirical Methods in Natural Language Processing, {EMNLP} 2020, Online, November 16-20, 2020}, pages 6769--6781. Association for Computational Linguistics.

\bibitem[{Kasai et~al.(2022)Kasai, Sakaguchi, Takahashi, Bras, Asai, Yu, Radev, Smith, Choi, and Inui}]{RealTimeQA}
Jungo Kasai, Keisuke Sakaguchi, Yoichi Takahashi, Ronan~Le Bras, Akari Asai, Xinyan Yu, Dragomir~R. Radev, Noah~A. Smith, Yejin Choi, and Kentaro Inui. 2022.
\newblock \href {https://doi.org/10.48550/ARXIV.2207.13332} {Realtime {QA:} what's the answer right now?}
\newblock \emph{CoRR}, abs/2207.13332.

\bibitem[{Kwiatkowski et~al.(2019)Kwiatkowski, Palomaki, Redfield, Collins, Parikh, Alberti, Epstein, Polosukhin, Devlin, Lee et~al.}]{kwiatkowski2019natural}
Tom Kwiatkowski, Jennimaria Palomaki, Olivia Redfield, Michael Collins, Ankur Parikh, Chris Alberti, Danielle Epstein, Illia Polosukhin, Jacob Devlin, Kenton Lee, et~al. 2019.
\newblock Natural questions: a benchmark for question answering research.
\newblock \emph{Transactions of the Association for Computational Linguistics}, 7:453--466.

\bibitem[{Lazaridou et~al.(2021)Lazaridou, Kuncoro, Gribovskaya, Agrawal, Liska, Terzi, Gimenez, de~Masson~d'Autume, Kocisk{\'{y}}, Ruder, Yogatama, Cao, Young, and Blunsom}]{LazaridouKGALTG21}
Angeliki Lazaridou, Adhiguna Kuncoro, Elena Gribovskaya, Devang Agrawal, Adam Liska, Tayfun Terzi, Mai Gimenez, Cyprien de~Masson~d'Autume, Tom{\'{a}}s Kocisk{\'{y}}, Sebastian Ruder, Dani Yogatama, Kris Cao, Susannah Young, and Phil Blunsom. 2021.
\newblock \href {https://proceedings.neurips.cc/paper/2021/hash/f5bf0ba0a17ef18f9607774722f5698c-Abstract.html} {Mind the gap: Assessing temporal generalization in neural language models}.
\newblock In \emph{Advances in Neural Information Processing Systems 34: Annual Conference on Neural Information Processing Systems 2021, NeurIPS 2021, December 6-14, 2021, virtual}, pages 29348--29363.

\bibitem[{Li et~al.(2023{\natexlab{a}})Li, Rawat, Zaheer, Wang, Lukasik, Veit, Yu, and Kumar}]{LiRZWLVYK23}
Daliang Li, Ankit~Singh Rawat, Manzil Zaheer, Xin Wang, Michal Lukasik, Andreas Veit, Felix~X. Yu, and Sanjiv Kumar. 2023{\natexlab{a}}.
\newblock \href {https://doi.org/10.18653/V1/2023.FINDINGS-ACL.112} {Large language models with controllable working memory}.
\newblock In \emph{Findings of the Association for Computational Linguistics: {ACL} 2023, Toronto, Canada, July 9-14, 2023}, pages 1774--1793. Association for Computational Linguistics.

\bibitem[{Li et~al.(2023{\natexlab{b}})Li, Holtzman, Fried, Liang, Eisner, Hashimoto, Zettlemoyer, and Lewis}]{cd}
Xiang~Lisa Li, Ari Holtzman, Daniel Fried, Percy Liang, Jason Eisner, Tatsunori Hashimoto, Luke Zettlemoyer, and Mike Lewis. 2023{\natexlab{b}}.
\newblock \href {https://doi.org/10.18653/V1/2023.ACL-LONG.687} {Contrastive decoding: Open-ended text generation as optimization}.
\newblock In \emph{Proceedings of the 61st Annual Meeting of the Association for Computational Linguistics (Volume 1: Long Papers), {ACL} 2023, Toronto, Canada, July 9-14, 2023}, pages 12286--12312. Association for Computational Linguistics.

\bibitem[{Longpre et~al.(2021)Longpre, Perisetla, Chen, Ramesh, DuBois, and Singh}]{LongprePCRD021}
Shayne Longpre, Kartik Perisetla, Anthony Chen, Nikhil Ramesh, Chris DuBois, and Sameer Singh. 2021.
\newblock \href {https://doi.org/10.18653/V1/2021.EMNLP-MAIN.565} {Entity-based knowledge conflicts in question answering}.
\newblock In \emph{Proceedings of the 2021 Conference on Empirical Methods in Natural Language Processing, {EMNLP} 2021, Virtual Event / Punta Cana, Dominican Republic, 7-11 November, 2021}, pages 7052--7063. Association for Computational Linguistics.

\bibitem[{Mallen et~al.(2023)Mallen, Asai, Zhong, Das, Khashabi, and Hajishirzi}]{POPQA}
Alex Mallen, Akari Asai, Victor Zhong, Rajarshi Das, Daniel Khashabi, and Hannaneh Hajishirzi. 2023.
\newblock \href {https://doi.org/10.18653/V1/2023.ACL-LONG.546} {When not to trust language models: Investigating effectiveness of parametric and non-parametric memories}.
\newblock In \emph{Proceedings of the 61st Annual Meeting of the Association for Computational Linguistics (Volume 1: Long Papers), {ACL} 2023, Toronto, Canada, July 9-14, 2023}, pages 9802--9822. Association for Computational Linguistics.

\bibitem[{Maynez et~al.(2020)Maynez, Narayan, Bohnet, and McDonald}]{DBLP:conf/acl/MaynezNBM20}
Joshua Maynez, Shashi Narayan, Bernd Bohnet, and Ryan~T. McDonald. 2020.
\newblock \href {https://doi.org/10.18653/V1/2020.ACL-MAIN.173} {On faithfulness and factuality in abstractive summarization}.
\newblock In \emph{Proceedings of the 58th Annual Meeting of the Association for Computational Linguistics, {ACL} 2020, Online, July 5-10, 2020}, pages 1906--1919. Association for Computational Linguistics.

\bibitem[{Meister et~al.(2023)Meister, Pimentel, Wiher, and Cotterell}]{typicalsampling}
Clara Meister, Tiago Pimentel, Gian Wiher, and Ryan Cotterell. 2023.
\newblock \href {https://transacl.org/ojs/index.php/tacl/article/view/3993} {Locally typical sampling}.
\newblock \emph{Trans. Assoc. Comput. Linguistics}, 11:102--121.

\bibitem[{Neeman et~al.(2023)Neeman, Aharoni, Honovich, Choshen, Szpektor, and Abend}]{NeemanAHCSA23}
Ella Neeman, Roee Aharoni, Or~Honovich, Leshem Choshen, Idan Szpektor, and Omri Abend. 2023.
\newblock \href {https://doi.org/10.18653/V1/2023.ACL-LONG.559} {Disentqa: Disentangling parametric and contextual knowledge with counterfactual question answering}.
\newblock In \emph{Proceedings of the 61st Annual Meeting of the Association for Computational Linguistics (Volume 1: Long Papers), {ACL} 2023, Toronto, Canada, July 9-14, 2023}, pages 10056--10070. Association for Computational Linguistics.

\bibitem[{Pagnoni et~al.(2021)Pagnoni, Balachandran, and Tsvetkov}]{DBLP:conf/naacl/PagnoniBT21}
Artidoro Pagnoni, Vidhisha Balachandran, and Yulia Tsvetkov. 2021.
\newblock \href {https://doi.org/10.18653/V1/2021.NAACL-MAIN.383} {Understanding factuality in abstractive summarization with {FRANK:} {A} benchmark for factuality metrics}.
\newblock In \emph{Proceedings of the 2021 Conference of the North American Chapter of the Association for Computational Linguistics: Human Language Technologies, {NAACL-HLT} 2021, Online, June 6-11, 2021}, pages 4812--4829. Association for Computational Linguistics.

\bibitem[{Petroni et~al.(2019)Petroni, Rockt{\"{a}}schel, Riedel, Lewis, Bakhtin, Wu, and Miller}]{PetroniRRLBWM19}
Fabio Petroni, Tim Rockt{\"{a}}schel, Sebastian Riedel, Patrick S.~H. Lewis, Anton Bakhtin, Yuxiang Wu, and Alexander~H. Miller. 2019.
\newblock \href {https://doi.org/10.18653/V1/D19-1250} {Language models as knowledge bases?}
\newblock In \emph{Proceedings of the 2019 Conference on Empirical Methods in Natural Language Processing and the 9th International Joint Conference on Natural Language Processing, {EMNLP-IJCNLP} 2019, Hong Kong, China, November 3-7, 2019}, pages 2463--2473. Association for Computational Linguistics.

\bibitem[{Raffel et~al.(2020)Raffel, Shazeer, Roberts, Lee, Narang, Matena, Zhou, Li, and Liu}]{T5}
Colin Raffel, Noam Shazeer, Adam Roberts, Katherine Lee, Sharan Narang, Michael Matena, Yanqi Zhou, Wei Li, and Peter~J. Liu. 2020.
\newblock \href {http://jmlr.org/papers/v21/20-074.html} {Exploring the limits of transfer learning with a unified text-to-text transformer}.
\newblock \emph{J. Mach. Learn. Res.}, 21:140:1--140:67.

\bibitem[{Rajpurkar et~al.(2016)Rajpurkar, Zhang, Lopyrev, and Liang}]{squad}
Pranav Rajpurkar, Jian Zhang, Konstantin Lopyrev, and Percy Liang. 2016.
\newblock \href {https://doi.org/10.18653/V1/D16-1264} {Squad: 100, 000+ questions for machine comprehension of text}.
\newblock In \emph{Proceedings of the 2016 Conference on Empirical Methods in Natural Language Processing, {EMNLP} 2016, Austin, Texas, USA, November 1-4, 2016}, pages 2383--2392. The Association for Computational Linguistics.

\bibitem[{Ren et~al.(2023)Ren, Wang, Qu, Zhao, Liu, Tian, Wu, Wen, and Wang}]{ren2023investigating}
Ruiyang Ren, Yuhao Wang, Yingqi Qu, Wayne~Xin Zhao, Jing Liu, Hao Tian, Hua Wu, Ji-Rong Wen, and Haifeng Wang. 2023.
\newblock Investigating the factual knowledge boundary of large language models with retrieval augmentation.
\newblock \emph{arXiv preprint arXiv:2307.11019}.

\bibitem[{Roberts et~al.(2020)Roberts, Raffel, and Shazeer}]{RobertsRS20}
Adam Roberts, Colin Raffel, and Noam Shazeer. 2020.
\newblock \href {https://doi.org/10.18653/V1/2020.EMNLP-MAIN.437} {How much knowledge can you pack into the parameters of a language model?}
\newblock In \emph{Proceedings of the 2020 Conference on Empirical Methods in Natural Language Processing, {EMNLP} 2020, Online, November 16-20, 2020}, pages 5418--5426. Association for Computational Linguistics.

\bibitem[{Schick et~al.(2023)Schick, Dwivedi{-}Yu, Dess{\`{\i}}, Raileanu, Lomeli, Zettlemoyer, Cancedda, and Scialom}]{Toolformer}
Timo Schick, Jane Dwivedi{-}Yu, Roberto Dess{\`{\i}}, Roberta Raileanu, Maria Lomeli, Luke Zettlemoyer, Nicola Cancedda, and Thomas Scialom. 2023.
\newblock \href {https://doi.org/10.48550/ARXIV.2302.04761} {Toolformer: Language models can teach themselves to use tools}.
\newblock \emph{CoRR}, abs/2302.04761.

\bibitem[{Shi et~al.(2023{\natexlab{a}})Shi, Han, Lewis, Tsvetkov, Zettlemoyer, and Yih}]{cad}
Weijia Shi, Xiaochuang Han, Mike Lewis, Yulia Tsvetkov, Luke Zettlemoyer, and Scott~Wen{-}tau Yih. 2023{\natexlab{a}}.
\newblock \href {https://doi.org/10.48550/ARXIV.2305.14739} {Trusting your evidence: Hallucinate less with context-aware decoding}.
\newblock \emph{CoRR}, abs/2305.14739.

\bibitem[{Shi et~al.(2023{\natexlab{b}})Shi, Min, Yasunaga, Seo, James, Lewis, Zettlemoyer, and Yih}]{shi2023replug}
Weijia Shi, Sewon Min, Michihiro Yasunaga, Minjoon Seo, Rich James, Mike Lewis, Luke Zettlemoyer, and Wen-tau Yih. 2023{\natexlab{b}}.
\newblock Replug: Retrieval-augmented black-box language models.
\newblock \emph{arXiv preprint arXiv:2301.12652}.

\bibitem[{Shuster et~al.(2021)Shuster, Poff, Chen, Kiela, and Weston}]{Shuster}
Kurt Shuster, Spencer Poff, Moya Chen, Douwe Kiela, and Jason Weston. 2021.
\newblock \href {https://doi.org/10.18653/V1/2021.FINDINGS-EMNLP.320} {Retrieval augmentation reduces hallucination in conversation}.
\newblock In \emph{Findings of the Association for Computational Linguistics: {EMNLP} 2021, Virtual Event / Punta Cana, Dominican Republic, 16-20 November, 2021}, pages 3784--3803. Association for Computational Linguistics.

\bibitem[{Sun et~al.(2023)Sun, Liu, Gong, Zhang, and Duan}]{sun2023beamsearchqa}
Hao Sun, Xiao Liu, Yeyun Gong, Yan Zhang, and Nan Duan. 2023.
\newblock Beamsearchqa: Large language models are strong zero-shot qa solver.
\newblock \emph{arXiv preprint arXiv:2305.14766}.

\bibitem[{Touvron et~al.(2023)Touvron, Martin, Stone, Albert, Almahairi, Babaei, Bashlykov, Batra, Bhargava, Bhosale et~al.}]{touvron2023llama}
Hugo Touvron, Louis Martin, Kevin Stone, Peter Albert, Amjad Almahairi, Yasmine Babaei, Nikolay Bashlykov, Soumya Batra, Prajjwal Bhargava, Shruti Bhosale, et~al. 2023.
\newblock Llama 2: Open foundation and fine-tuned chat models.
\newblock \emph{arXiv preprint arXiv:2307.09288}.

\bibitem[{Vaswani et~al.(2017)Vaswani, Shazeer, Parmar, Uszkoreit, Jones, Gomez, Kaiser, and Polosukhin}]{vaswani2017attention}
Ashish Vaswani, Noam Shazeer, Niki Parmar, Jakob Uszkoreit, Llion Jones, Aidan~N Gomez, {\L}ukasz Kaiser, and Illia Polosukhin. 2017.
\newblock Attention is all you need.
\newblock \emph{Advances in neural information processing systems}, 30.

\bibitem[{Wang et~al.(2023{\natexlab{a}})Wang, Wang, Li, Gao, Yin, and Ren}]{WangWLGYR23}
Peifeng Wang, Zhengyang Wang, Zheng Li, Yifan Gao, Bing Yin, and Xiang Ren. 2023{\natexlab{a}}.
\newblock \href {https://doi.org/10.18653/V1/2023.ACL-LONG.304} {{SCOTT:} self-consistent chain-of-thought distillation}.
\newblock In \emph{Proceedings of the 61st Annual Meeting of the Association for Computational Linguistics (Volume 1: Long Papers), {ACL} 2023, Toronto, Canada, July 9-14, 2023}, pages 5546--5558. Association for Computational Linguistics.

\bibitem[{Wang et~al.(2023{\natexlab{b}})Wang, Wei, Schuurmans, Le, Chi, Narang, Chowdhery, and Zhou}]{self-consistency}
Xuezhi Wang, Jason Wei, Dale Schuurmans, Quoc~V. Le, Ed~H. Chi, Sharan Narang, Aakanksha Chowdhery, and Denny Zhou. 2023{\natexlab{b}}.
\newblock \href {https://openreview.net/pdf?id=1PL1NIMMrw} {Self-consistency improves chain of thought reasoning in language models}.
\newblock In \emph{The Eleventh International Conference on Learning Representations, {ICLR} 2023, Kigali, Rwanda, May 1-5, 2023}. OpenReview.net.

\bibitem[{Wang et~al.(2023{\natexlab{c}})Wang, Feng, Wang, Shi, Balachandran, He, and Tsvetkov}]{Wangyike}
Yike Wang, Shangbin Feng, Heng Wang, Weijia Shi, Vidhisha Balachandran, Tianxing He, and Yulia Tsvetkov. 2023{\natexlab{c}}.
\newblock \href {https://doi.org/10.48550/ARXIV.2310.00935} {Resolving knowledge conflicts in large language models}.
\newblock \emph{CoRR}, abs/2310.00935.

\bibitem[{Wang et~al.(2023{\natexlab{d}})Wang, Li, Sun, and Liu}]{DBLP:conf/emnlp/WangLSL23}
Yile Wang, Peng Li, Maosong Sun, and Yang Liu. 2023{\natexlab{d}}.
\newblock \href {https://aclanthology.org/2023.findings-emnlp.691} {Self-knowledge guided retrieval augmentation for large language models}.
\newblock In \emph{Findings of the Association for Computational Linguistics: {EMNLP} 2023, Singapore, December 6-10, 2023}, pages 10303--10315. Association for Computational Linguistics.

\bibitem[{Zhang et~al.(2022)Zhang, Roller, Goyal, Artetxe, Chen, Chen, Dewan, Diab, Li, Lin et~al.}]{zhang2022opt}
Susan Zhang, Stephen Roller, Naman Goyal, Mikel Artetxe, Moya Chen, Shuohui Chen, Christopher Dewan, Mona Diab, Xian Li, Xi~Victoria Lin, et~al. 2022.
\newblock Opt: Open pre-trained transformer language models.
\newblock \emph{arXiv preprint arXiv:2205.01068}.

\bibitem[{Zhou et~al.(2023)Zhou, Zhang, Poon, and Chen}]{prompting}
Wenxuan Zhou, Sheng Zhang, Hoifung Poon, and Muhao Chen. 2023.
\newblock \href {https://doi.org/10.48550/ARXIV.2303.11315} {Context-faithful prompting for large language models}.
\newblock \emph{CoRR}, abs/2303.11315.

\end{thebibliography}

\clearpage
\appendix

\section{Analyses of the Performances of Existing Decoding Methods} \label{appendix:analysis}
We compare the performance of three baseline methods (introduced in Section~\ref{baselines}) on LLaMA2-13B model: Regular takes the context and question as input with greedy decoding, and the other two methods are specialized in conflict-solving decoding strategies.
We experiment on three subsets of NQ~\cite{kwiatkowski2019natural}: data without conflict($\sim$1K), data with all conflicts ($\sim$3K), and random sampled data with half-ratio conflicts. The details of these datasets is introduced in Appendix.~\ref{appendix:data} and the detailed experimental results are illustrated in the NQ dataset of Table~\ref{tbl:main} and Figure~\ref{fig:line1}. We use Exact Match (EM) as the major evaluation metric in the comparison.

In the table presented below, it is observed that when conflicts occur 100\% of the time, both conflict-solving decoding methods address the issue more effectively than Regular. Notably, CAD exhibits a pronounced improvement, achieving a significant increase of up to 2.08 in the EM score. Nevertheless, when the ratio of conflict decreases, there is a discernible decrease in those methods' efficacy. Especially, the performance of CAD noticeably deteriorates, trailing behind the Regular by a margin of 5.55.

\section{Information-Theoretic Properties of Language Models} \label{properties}
\subsection{Locally Typical Set}
\citet{typicalsampling} posit that the language modeling can be conceptualized as a discrete stochastic process and 
build its notion on the concept of \textbf{typical set}. Informally, the typical set, derived from information theory, is the set of all samples that we would expect when sampling from the language model distribution. 
But it relies on a stationary and ergodic language process which contradicts with the non-ergodic language process. 
So they define a more restrictive notion of typical set - termed as \defn{locally typical set} - for the language process, from which each token generates in a natural and error-minimizing manner.


\begin{definition}[Locally Typical Set]\label{def:local-typical}
Let $\boldY = \{Y_t\}_{t=1}^\infty$ be a discrete stochastic process under distribution $p$.
The $(T,\varepsilon)$-\defn{locally typical set} of $\boldY$ is
the set of all sequences of length exactly $T$ such that\looseness=-1
\begin{align}
\mathcal{L}_\varepsilon^{(T)} &= \Big\{ \yy = \boldsymbol{y}_0\cdots \boldsymbol{y}_T \mid \forall  1 \leq t \leq T, \\
    & \Big|\log p(y_t \mid \yy_{<t}) + \ent(Y_t \mid \boldY_{<t} = \yy_{<t}) \Big| < \varepsilon \Big\} \nonumber
\end{align}
\end{definition}
The relationship can be formalized as the following hypothesis, which has been verified empirically using data from human language process.
\begin{hypothesis}\label{hyp:info-rate}
Samples $\yy = \boldsymbol{y}_0 \cdots \boldsymbol{y}_T $ from a human language process with distribution $p$ tend to belong to the process's locally typical set $\mathcal{L}_\varepsilon^{(T)}$ for large enough $T$ and some $\varepsilon > 0$.
In words, this means that we should expect every word in natural-sounding sentences to be close to the \emph{expected} information content under $p$, i.e., the conditional entropy given prior context.
\end{hypothesis}
The $\ent$ represents the entropy rate of $\boldY$, which is equivalent to the standard definition of (Shannon) entropy $\mathcal{H}$ for a random variable $\boldY$.
The locally typical set restricts the set of tokens to those for which each has an information context——measured by its negative log probability——close to the \textit{expected} information content given prior context, i.e., the entropy of the distribution $p(\cdot \mid \yy_{<t})$.

\subsection{Stable Entropy Hypothesis}
\citet{stableentropy} postulate that natural language generations usually lie in a narrow and nearly flat entropy band. 
In the empirical analyses, they observe that, the mean entropy of a language model remains stable over the length of the generation, 
which is defined as the \textbf{stable entropy baseline}\footnote{Here we drop the smoothing step for brevity.} in Eq.\ref{eq:basline}.  
Under the context distribution at time $t$, an input $\boldsymbol{x}$ and vocabulary $\mathcal{V}$, $y_t \in \mathcal{V}$:
\begin{equation}\label{eq:basline}
  \mu_{\mathcal{H}}(t;\mathcal{V}) = \mathbb{E}_{y_t \in \mathcal{V}} \bigl[  {\mathcal{H}}(y_t\mid \boldsymbol{x}, \boldsymbol{y}_{<t}) \bigl].
\end{equation}

Then a \textbf{stable entropy zone} is defined as the zone around the stable entropy baseline that covers a major fraction of entropy of the model under the target distribution. 
They define it by standard deviation ($\sigma_{\mathcal{H}}(t;\mathcal{V})$) around the stable entropy baseline as the stable entropy zone and posit the following hypothesis:
\begin{hypothesis} \label{hyp:stable}
Decoding algorithms whose generation’s smoothed entropy stays mostly enclosed within the stable entropy zone will produce higher quality, coherent, less repetitive, and more "human-like" text.
\end{hypothesis}

\section{Detailed Proofs of Propositions} \label{derivations}
\begin{assumption} \label{assumption}
If a task-specific context $\boldsymbol{c}$ is contained by parametric knowledge (denoted as $\mathcal{K}$) without triggering any conflicts in model $p$, then it also can be the natural generation of model. 
\begin{equation*}
    \text{if } \boldsymbol{c} \in \mathcal{K} \text{, then } \boldsymbol{c} \in \bigcup \boldsymbol{y}\sim p(\cdot \mid \boldsymbol{x})
\end{equation*}
\end{assumption}
Here, $\bigcup \boldsymbol{y}$ indicates the sampling set of all natural generations by the model conditioning by the question $\boldsymbol{x}$. 
Then we define the entropy of the model following~\citet{stableentropy} as
\begin{equation}
    \mathcal{H}(\boldsymbol{y}_t|\boldsymbol{x},\boldsymbol{y}_{<t}) = \underset{y_t \sim p(\cdot| \boldsymbol{y}_{<t})}{\mathbb{E}}  -\log p(y_t|\boldsymbol{x},\boldsymbol{y}_{<t}) 
\end{equation}
For brevity, we use $\mathcal{H}_{1}(\boldsymbol{y_t})$ to represent the entropy of conditional distribution over question $\boldsymbol{x}$ and generation $\boldsymbol{y}_{<t}$, and $\mathcal{H}_{2}(\boldsymbol{y_t})$ denotes the entropy conditioning by $\boldsymbol{x}$, $\boldsymbol{y}_{<t}$, and assumed generation $\boldsymbol{c}$.
\begin{align}
    \mathcal{H}_{1}(\boldsymbol{y_t}) &= \mathcal{H}(\boldsymbol{y}_t|\boldsymbol{x},\boldsymbol{y}_{<t})\\
    \mathcal{H}_{2}(\boldsymbol{y_t}) &= \mathcal{H}(\boldsymbol{y}_t|\boldsymbol{x},\boldsymbol{c},\boldsymbol{y}_{<t}) 
\end{align}
where $\mathcal{H}_{2}(\boldsymbol{y_t})$ denotes the entropy conditioning by previously generated tokens $\boldsymbol{c}$ and $\boldsymbol{y}_{<t}$, and $\mathcal{H}_{1}(\boldsymbol{y_t})$ represents the entropy of conditional distribution over generation $\boldsymbol{y}_{<t}$. 
\begin{proposition}[Bound on Entropy Shift]
The entropy shift denoted as $\mathcal{H}_{2}(\boldsymbol{y_t}) - \mathcal{H}_{1}(\boldsymbol{y_t}) $ is bounded within the width of the \textbf{stable entropy zone}.
\end{proposition}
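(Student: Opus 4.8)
The plan is to show that both $\mathcal{H}_1(\boldsymbol{y_t})$ and $\mathcal{H}_2(\boldsymbol{y_t})$ are confined to the \emph{same} stable entropy zone at step $t$, whereupon their difference is automatically bounded by the width of that zone. First I would invoke Assumption~\ref{assumption}: because the non-conflicting context $\boldsymbol{c}$ lies in the parametric knowledge $\mathcal{K}$, it is itself a natural generation of the model, i.e. $\boldsymbol{c} \in \bigcup \boldsymbol{y} \sim p(\cdot \mid \boldsymbol{x})$. Consequently the concatenated prefix $(\boldsymbol{c},\boldsymbol{y}_{<t})$ is a valid natural-generation history conditioned on $\boldsymbol{x}$, just as $\boldsymbol{y}_{<t}$ alone is. This reduction is the conceptual crux: it lets me treat $\mathcal{H}_2$, which conditions on the augmented history $(\boldsymbol{x},\boldsymbol{c},\boldsymbol{y}_{<t})$, on exactly the same footing as $\mathcal{H}_1$, which conditions only on $(\boldsymbol{x},\boldsymbol{y}_{<t})$.

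Next I would apply the Stable Entropy Hypothesis (Hypothesis~\ref{hyp:stable}). Since both histories arise from natural generations of the model, the per-token conditional entropies $\mathcal{H}_1(\boldsymbol{y_t})$ and $\mathcal{H}_2(\boldsymbol{y_t})$ each lie within the stable entropy zone at step $t$, namely the interval centered at the baseline $\mu_{\mathcal{H}}(t;\mathcal{V})$ with half-width given by the standard deviation $\sigma_{\mathcal{H}}(t;\mathcal{V})$. Writing the zone as $[\mu_{\mathcal{H}}(t;\mathcal{V}) - \sigma_{\mathcal{H}}(t;\mathcal{V}),\, \mu_{\mathcal{H}}(t;\mathcal{V}) + \sigma_{\mathcal{H}}(t;\mathcal{V})]$, both entropies fall inside this common interval, so that
\begin{equation}
  \big| \mathcal{H}_2(\boldsymbol{y_t}) - \mathcal{H}_1(\boldsymbol{y_t}) \big| \leq 2\,\sigma_{\mathcal{H}}(t;\mathcal{V}),
\end{equation}
which is exactly the width of the stable entropy zone, completing the argument.

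The hard part is the first step: rigorously justifying that conditioning on the context $\boldsymbol{c}$ keeps the next-token entropy inside the same band as conditioning only on $\boldsymbol{y}_{<t}$. The Stable Entropy Hypothesis is phrased for entropies along a single natural-generation trajectory, whereas here I am comparing two trajectories that share the suffix $\boldsymbol{y}_{<t}$ but differ in whether $\boldsymbol{c}$ is prepended. The bridge is precisely Assumption~\ref{assumption}, which certifies that the $\boldsymbol{c}$-augmented prefix is still a natural generation, so the hypothesis applies verbatim to $\mathcal{H}_2$. I would therefore emphasize that the bound holds only in the non-conflicting regime where the assumption is in force: a conflicting $\boldsymbol{c}$ would fail $\boldsymbol{c} \in \mathcal{K}$, and $\mathcal{H}_2$ could then escape the zone --- which is in fact the very signal the decoding constraint exploits to detect conflicts.
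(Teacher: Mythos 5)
Your proposal matches the paper's own proof in both structure and substance: it invokes the assumption that a non-conflicting context is itself a natural generation of the model, places both $\mathcal{H}_{1}(\boldsymbol{y_t})$ and $\mathcal{H}_{2}(\boldsymbol{y_t})$ inside the stable entropy zone, and bounds their difference by the zone's width via the triangle inequality. The only refinement in the paper's version is that it first distinguishes the two baselines $\mu_{\mathcal{H}_{1}}$ and $\mu_{\mathcal{H}_{2}}$ associated with the two conditioning histories and argues they can be equated after smoothing because the context length is limited, a step you implicitly absorb by asserting from the outset a single common zone centered at $\mu_{\mathcal{H}}(t;\mathcal{V})$.
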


\begin{proof}
Note that context $c$ is the natural generation of a language model in the setting, both the entropy $\mathcal{H}_{1}(\boldsymbol{y_t})$ and $\mathcal{H}_{2}(\boldsymbol{y_t})$ should fall into a stable entropy zone around the mean entropy $\mu_{\mathcal{H}}$. Let $\beta$ be the threshold of a certain standard deviation around the mean entropy. According to Eq.~\ref{eq:basline}, it can be deduced that
\begin{equation} \label{stable zone}
\big | \mathcal{H}_{1}(\boldsymbol{y_t}) - \mu_{\mathcal{H}_{1}} \big | < \frac{\beta}{2},\
\big | \mathcal{H}_{2}(\boldsymbol{y_t}) - \mu_{\mathcal{H}_{2}} \big | < \frac{\beta}{2}
\end{equation}
\textbf{Stable entropy baseline} demonstrates that mean entropy of a model under the target context distribution remains stable. Since the length of context is limited, the mean entropy $\mu_{\mathcal{H}_{1}}$ and $\mu_{\mathcal{H}_{2}}$ can be equated if smoothed, denoted as $\mu_{\mathcal{H}}$. Considering inequalities in Eq.(\ref{stable zone}) jointly, we can obtain the bound on the entropy shift using the triangle inequality: 
\begin{align} \label{stable ineq}
&\ \quad \big | \mathcal{H}_{2}(\boldsymbol{y_t}) - \mathcal{H}_{1}(\boldsymbol{y_t}) \big | \nonumber\\
&= \big | \big (\mathcal{H}_{2}(\boldsymbol{y_t}) - \mu_{\mathcal{H}}\big ) - \big(\mathcal{H}_{1}(\boldsymbol{y_t}) - \mu_{\mathcal{H}}\big) \big |\nonumber \\ 
& < \big | \mathcal{H}_{2}(\boldsymbol{y_t}) - \mu_{\mathcal{H}} \big | + 
\big | \mathcal{H}_{1}(\boldsymbol{y_t}) - \mu_{\mathcal{H}} \big | <  \beta
\end{align}
\end{proof}

\begin{proposition}[Bound on information-entropy shift]
As the information content of a random variable is quantified as its negative log-probability.
Let the information content $I(y_t)=-\log p(y_t\mid \boldsymbol{x},\boldsymbol{c},\boldsymbol{y}_{<t})$, we denote the \textbf{information-entropy shift} as: $I(y_t) - \mathcal{H}_{1}(\boldsymbol{y_t}) $. The following bound holds for a constant:
\begin{equation} 
    \big | I(y_t) - \mathcal{H}_{1}(\boldsymbol{y_t}) \big | < \gamma 
\end{equation}
where $\gamma > 0$.
\end{proposition}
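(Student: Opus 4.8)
The plan is to reduce the information--entropy shift to the already-established entropy shift (Proposition on Bound on Entropy Shift) plus a local fluctuation term controlled by the Locally Typical Set. The key observation is that $I(y_t) = -\log p(y_t \mid \boldsymbol{x}, \boldsymbol{c}, \boldsymbol{y}_{<t})$ is the information content of the single sampled token $y_t$ under the context-aware distribution $p_2$, whereas $\mathcal{H}_2(\boldsymbol{y_t})$ is the \emph{expected} value of exactly this quantity under that same distribution. So $I(y_t)$ and $\mathcal{H}_2(\boldsymbol{y_t})$ differ only by the gap between a realized information content and its conditional expectation.

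First I would insert $\mathcal{H}_2(\boldsymbol{y_t})$ as an intermediate term and apply the triangle inequality:
\begin{align*}
\big| I(y_t) - \mathcal{H}_1(\boldsymbol{y_t}) \big|
  &\leq \big| I(y_t) - \mathcal{H}_2(\boldsymbol{y_t}) \big|
     + \big| \mathcal{H}_2(\boldsymbol{y_t}) - \mathcal{H}_1(\boldsymbol{y_t}) \big|.
\end{align*}
The second summand is bounded by $\beta$ by the preceding proposition, so it remains to control $\big| I(y_t) - \mathcal{H}_2(\boldsymbol{y_t}) \big|$. Here I would invoke the Locally Typical Set (Definition~\ref{def:local-typical}) together with Hypothesis~\ref{hyp:info-rate}: under the context-aware distribution $p_2$, a token $y_t$ generated in the natural, error-minimizing manner lies in the $(T,\varepsilon)$-locally typical set, which is precisely the statement that its information content $-\log p(y_t \mid \boldsymbol{x}, \boldsymbol{c}, \boldsymbol{y}_{<t})$ stays within $\varepsilon$ of the conditional entropy $\mathcal{H}_2(\boldsymbol{y_t})$. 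Thus $\big| I(y_t) - \mathcal{H}_2(\boldsymbol{y_t}) \big| < \varepsilon$.

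Combining the two bounds gives $\big| I(y_t) - \mathcal{H}_1(\boldsymbol{y_t}) \big| < \varepsilon + \beta$, and setting $\gamma := \varepsilon + \beta > 0$ closes the argument. I expect the main obstacle to be conceptual rather than computational: one must argue that the locally typical property, which Hypothesis~\ref{hyp:info-rate} asserts for samples of a \emph{natural} language process, legitimately applies here. This requires leaning on Assumption~\ref{assumption} --- that a non-conflicting context $\boldsymbol{c}$ is itself a natural generation of the model, so that tokens decoded under $p_2$ inherit typicality. The bound is therefore meaningful exactly in the non-conflicting regime; any violation of the constraint flags a departure from typicality, which is the intended signal of a knowledge conflict.
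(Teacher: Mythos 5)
Your proposal matches the paper's own proof essentially step for step: the same triangle-inequality decomposition through $\mathcal{H}_2(\boldsymbol{y_t})$, the same use of the entropy-shift bound $\beta$ from the preceding proposition, the same appeal to local typicality to get $\big| I(y_t) - \mathcal{H}_2(\boldsymbol{y_t}) \big| < \epsilon$, and the same conclusion $\gamma = \beta + \epsilon$. Your closing remark about leaning on the naturalness assumption for the non-conflicting context is also exactly the (implicit) justification the paper relies on, so there is nothing to add.
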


\begin{proof}
\textbf{Locally typicality} demonstrates that the information content of $y$ should is quite close to a specific value of the entropy under model distribution $p$. It means that there exists a sufficiently small constant $\epsilon > 0$:
\begin{equation}\label{typical ineq}
    \big | I(y_t)- \mathcal{H}_{2}(\boldsymbol{y_t}) \big | < \epsilon
\end{equation}
which bounds the information of $y$ into a coherent and contextual generation. 
Applying triangle inequality on Eq.(\ref{stable ineq}) and Eq.(\ref{typical ineq}), the following inequality holds for a constant:
\begin{align}
&\ \quad \big | I(y_t) - \mathcal{H}_{1}(\boldsymbol{y_t}) \big | \nonumber\\
&= \big | \big (I(y_t) - \mathcal{H}_{2}(\boldsymbol{y_t})\big ) + \big (\mathcal{H}_{2}(\boldsymbol{y_t}) - \mathcal{H}_{1}(\boldsymbol{y_t}) \big )\big |\nonumber\\
&< \big | I(y_t) - \mathcal{H}_{2}(\boldsymbol{y_t}) \big | + \big | \mathcal{H}_{2}(\boldsymbol{y_t}) - \mathcal{H}_{1}(\boldsymbol{y_t}) \big |\nonumber\\
&< \beta + \epsilon = \gamma
\end{align}
\end{proof}

\section{Empirical Study of Assumption C.1} \label{appendix:assumption}
In this section, we show that the distribution entropy of non-conflicting context remains more stable than the non-conflicting one. Then the assumption~\ref{assumption} can be proved with the \textbf{stable entropy hypothesis}~\ref{hyp:stable}.

To demonstrate our assumption, we follow a similar setup as ~\citet{stableentropy} in a text completion setup. We use the LLaMA2-13B model and NQ data. We sample 500 pieces of data from Conf. and Non-Conf. sub-datasets respectively, then compute the mean smoothed entropy at each step and calculate the standard deviation (std) for each generation. 
\begin{figure}[htbp]
\begin{center}
\includegraphics[scale=0.5]{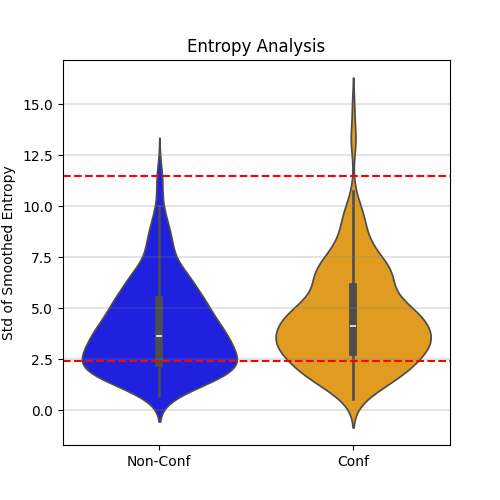} 
\caption{Conf. or Non-Conf. distributions of the 'Std of Smoothed Entropy' for NQ dataset.}
\label{fig:std}
\end{center}
\end{figure}
Figure~\ref{fig:std} visualizes the std of smoothed entropy for conflicting and non-conflicting generation. The vertical axis, labeled 'Std of Smoothed Entropy', represents the std of each step's entropy in the generation. The horizontal axis represents the NQ samples from Conf. and Non-Conf. data. From the violin plot, it can be observed that the entropy distribution for Conf. data exhibit a bimodal nature, suggesting that quite a few samples are characterized by large variances.  Furthermore, the box line of Conf. is higher than the one of Non-Conf., which demonstrates that the Non-Conf. is more likely to be a natural generation of the model due to its more stable entropy levels.
This is deduced by the \textbf{stable entropy hypothesis}, which posits that "\textit{generation’s smoothed entropy stays mostly enclosed within the stable entropy zone will produce higher quality, coherent, less repetitive, and more 'human-like' text.}".

\section{Dataset Details} \label{appendix:data}
We use three realistic QA datasets (NaturalQuestions~\cite{kwiatkowski2019natural}, SQuAD 1.1~\cite{squad}), and StrategyQA~\cite{geva2021did}) and one conflicting QA dataset (Counterfacts dataset~\cite{LongprePCRD021}) for evaluating our method. 

NaturalQuestions consists of real-world information-seeking queries issued to the Google search engine and their corresponding long answers (gold evidence passage) and short answers (one or more entities). In our study, we employ the long answers as the input context and short answers as the ground truth, and conduct evaluations on the dev set.

The SQuAD 1.1 is a common QA benchmark. It includes questions posed by human annotators on a given Wikipedia paragraph, where the answer to each question is a segment of text (or span) from the paragraph. In our experiments, we conduct experiments on the dev for evaluation. 

StrategyQA is a fact reasoning benchmark that necessitates the implicit question decomposition into reasoning steps. Built around Wikipedia terms, these questions are accompanied by multiple evidence paragraphs.  The model is expected to provide a True or False answer. We concatenate question-relevant evidences to form the input context. We adopt the training set for evaluation, considering the volume of data.

Counterfacts is based on the NaturalQuestions~\cite{kwiatkowski2019natural} dataset. To generate conflicting contextual knowledge, \citet{LongprePCRD021} first identify questions with named entity answers, find the supportive document for each question and then replace the gold answer entity in the document with a random entity.

\subsection{Posteriori judgement} \label{appendix:prior}
We delineates the process of identifying instances of knowledge conflicts. The evaluation of these conflicts is based on the accuracy\footnote{Given the excessively rigid nature of EM for evaluation, an F1 score of 0.5 has been employed as a proxy for preliminary categorization.} of the model's responses when context is not provided. The scenarios are divided into two categories:
\begin{itemize}
\item \textbf{Non-Conflicting (Non-Conf.)}: This category pertains to situations where the model is capable of accurately responding to a question without the need for its corresponding context. Such instances suggest that the model has internalized the context, thereby indicating a consistency between its parametric knowledge and the external contextual knowledge.
\item \textbf{Conflicting (Conf.)}: When the model fails to provide the true answer without the aid of context, indicating a conflict between its inherent parametric knowledge and the external contextual knowledge. Following~\citet{DBLP:conf/emnlp/WangLSL23}, incorrect responses reflects the model does not possess the knowledge equipped by the external context, which has a discrepancy with the model's parametric knowledge. 
\end{itemize}
In this setting, the NQ, SQuAD and StrategyQA datasets can serve as suitable approximations of realistic scenarios where conflicts may not necessarily occur. Additionally, the synthetic dataset named Counterfacts, which is composed exclusively of conflicting data (Conf. data), serves as a unique case. This is because it contains randomly replaced answers that are not inherently known to the model, distinguishing it from the aforementioned datasets.

\subsection{Data Statistic} \label{appendix:distribution}

\begin{table}[htbp]
\tiny
\centering
\resizebox{0.45\textwidth}{!}{
\begin{tabular}{ccclllll}
\toprule
& &  \multicolumn{2}{c}{\textbf{LLaMA2}} & \multicolumn{2}{c}{\textbf{OPT}} & \multicolumn{2}{c}{\textbf{FLAN-T5}}  \\
\cmidrule(lr){3-4}
\cmidrule(lr){5-6}
\cmidrule(lr){7-8}
\multicolumn{2}{c}{\textbf{Datasets}}  & 7B & 13B & 6.7B & 13B & 3B & 11B \\
\midrule
\multirow{3}{*}{ NQ ($\sim$4K) } & Total(\%)   & 100 & 100 & 100 & 100 & 100 & 100 \\
& Conf.(\%)   & 81.91 & 76.79 & 99.34 & 97.21 & 88.07 & 85.80 \\
& Non-Conf.(\%)   & 18.09 & 23.21 & 0.64 & 2.79 & 11.93 & 14.20 \\
\midrule
\multirow{3}{*}{ SQuAD ($\sim$6K) } & Total(\%)   & 100 & 100 & 100 & 100 & 100 & 100 \\
& Conf.(\%)   & 84.18 & 82.06 & 97.48 & 95.41 & 92.30 & 90.55 \\
& Non-Conf.(\%)   & 15.82 & 17.94 & 2.56 & 4.59 & 7.70 & 9.45 \\
\midrule
\multirow{3}{*}{ StrategyQA ($\sim$2K) } & Total(\%)   & 100 & 100 & 100 & 100 & 100 & 100 \\
& Conf.(\%)  & 40.31 & 39.43 &  94.98 & 88.91 & 36.11 & 33.23 \\
& Non-Conf.(\%)  & 59.69 & 60.57 & 5.02 & 11.09 & 63.89 & 66.77 \\
\midrule
\midrule
 Counterfacts ($\sim$6K)  & Conf.(\%)  & 100 & 100 & 100 & 100 & 100 & 100 \\
\bottomrule
\end{tabular}}
\caption{The data distributions of the datasets}
\label{tbl:distribution}
\end{table}
As illustrated in Table~\ref{tbl:distribution}, a discernible trend emerges wherein an escalation in the model's parameters is accompanied by a corresponding increase in the percentage of non-conflicting data, signifying a greater degree of internalized knowledge within larger models. Notably, among this cohort of models, the OPT series models exhibit the lowest parametric knowledge, yet they demonstrate substantial enhancements across most datasets when the COIECD method is applied. It is also noteworthy to observe that even in the case of the popular LLaMA2 models, the proportion of non-conflicting data does not surpass 25\% in the NQ and SQuAD datasets. This observation necessitate the further research for the inherent parametric knowledge enhancement of the model.

\section{Baseline Methods} \label{appendix:methods}
\paragraph{Contrastive Decoding (CD)} In our experiments, we employ the distribution $g(y_t)$ with a certain threshold as a baseline decoding method, referred to as the CD~\cite{cd} method. We modify the original object of CD (computes the distribution discrepancy between an small amateur model and an expert larger model) to simulate the form of $g(y_t)$.  
\begin{align*}
CD_{\text{original}} &= \log p_{\text{EXP}}(y_t|x,y<t) - \\
    & \quad \log p_{\text{AMA}}(y_t|x,y<t) \\ 
CD_{\text{modify}} &= \log p(y_t|x,y<t)-p(y_t|y<t) \\
&= \log g(y_t) 
\end{align*}
The threshold is same as in the original CD method:
\begin{align*}
&\mathcal{V}_{\text{head}}(y_{<t}) = \\
&\left\{ y_t \in \mathcal{V} : p(y_t | y_{<t}) \geq 0.1 \cdot \max_{y} p(y | y_{<t}) \right\} \nonumber
\end{align*}
Here, we represent the input context as $x$. CD adopts the object of difference between the output likelihood when inputs are presented with and without input context. It enhances the influence of the context for high-probability words within a crude threshold. 
Therefore, it cannot obtain consistent improvement in performance, particularly with non-conflicting data. 

And the Section~\ref{sec:constraint} aims to explore a delicate constraint for output distribution to find out whether the context is in conflict. Then we propose a contextual information-entropy constraint on fine-grained token level based on the perspective of information theory.
$$
C(y_{<t}) = \{ y \in \mathcal{V} : l_{p_s} \leq p_s(y_t) \leq u_{p_s} \}\quad (7) \nonumber
$$

\paragraph{Context-Aware Decoding (CAD)}
In CAD~\cite{cad} method, the output probability is a product-of-experts of the original output probability and PMI weighted by $\alpha$ as follow:
\begin{align*}
    &y_t \sim \operatorname{softmax}[(1+\alpha) \operatorname{logit}_\theta(y_t \mid \boldsymbol{c}, \boldsymbol{x}, \boldsymbol{y}_{<t})\\
    &\qquad \qquad - \alpha \operatorname{logit}_\theta(y_t \mid \boldsymbol{x}, \boldsymbol{y}_{<t})]
\end{align*} 
Since they set $\alpha$ = 1 for all models evaluated on the knowledge conflict datasets, this method can be regarded as an unconstrained ($\lambda=1 \; in\; C(y_{<t})$) decoding method when $\alpha$ is set to 1.
If so, CAD can be considered as a specific case of our approach.

Furthermore, CAD, as evidenced in their experimental evaluation, necessitates the different hyperparameter values (the adjustment level of CAD is 0.5 and 1) for realistic datasets and counterfacts. The absence of such specific adjustments results in a substantial decline in performance. This aspect of our findings underscores the superiority and robustness of our method.
\section{Maximization v.s. Sampling Strategies} \label{appendix:strategy}
Recall that prior experiments are conducted based on greedy strategy that maximizes the distribution probability, except for SC with a fixed sampling strategy. We explore other strategies like sampling alternatives based on the same baselines. Table \ref{tab:sampling} represents the results on maximization-based strategies: greedy decoding, and stochastic sampling: nucleus \cite{holtzman2019curious}, top-k \cite{LewisDF18}, typical \cite{typicalsampling} 
on the NQ dataset of LLaMA2-13B.

We observe that COIECD consistently produces the higher EM and F1 score the than Regular irrespective of the choice of decoding strategy. In contrast, both the CD and CAD exhibit a lack of stability in performance among diverse decoding strategies. Additionally, the result points to the significant value of beam search, particularly in relation to CD, in boosting performance. It can be attributed to the increasing search width, a feature of beam search which effectively eliminates disturbing tokens brought by contrastive object.
\renewcommand{\arraystretch}{1.2}
\begin{table}[htbp]
  \small
  \ra{1.1} \scalebox{0.65}{
  \begin{tabular}{@{}p{0.14\textwidth}cccccccc@{}}\toprule
  \multirow{2}{*}{\makecell[l]{\textbf{Decoding} \\ \textbf{Methods}}}& \multicolumn{2}{c}{ \textbf{Regular} } & \multicolumn{2}{c}{ \textbf{CD}} & \multicolumn{2}{c}{ \textbf{CAD}} & \multicolumn{2}{c}{ \textbf{\ours}}\\
    & EM & F1& EM & F1& EM & F1& EM & F1\\
  \midrule
  \text{Greedy} & \cellcolor[gray]{0.9}46.48 & \cellcolor[gray]{0.9}61.51 & \cellcolor{green!20}46.19 & \cellcolor{red!20}61.97 & \cellcolor{red!20}46.79 & \cellcolor{red!20}62.29 & \cellcolor{red!20}\textbf{47.42} & \cellcolor{red!20}\textbf{62.89} \\

  \cline{1-9}
  
  \makecell[l]{Nucleus \\ \quad ($p=0.9$)}   & \makecell[l]{ \\ \cellcolor[gray]{0.9}43.80} & \makecell[l]{ \\ \cellcolor[gray]{0.9}59.75}  & \makecell[l]{ \\ \cellcolor{red!20}46.14}& \makecell[l]{ \\ \cellcolor{red!20}61.73} & \makecell[l]{ \\ \cellcolor{red!20}44.37} & \makecell[l]{ \\ \cellcolor{red!20}60.50} & \makecell[l]{ \\ \cellcolor{red!20}\textbf{46.19}} & \makecell[l]{ \\ \cellcolor{red!20}\textbf{62.13}}\\

  \quad ($p=0.95$)  & \cellcolor[gray]{0.9}43.82 & \cellcolor[gray]{0.9}60.05  & \cellcolor{red!20}45.77 & \cellcolor{red!20}62.03 & \cellcolor{green!20}43.17 & \cellcolor{green!20}59.45 & \cellcolor{red!20}\textbf{46.53} & \cellcolor{red!20}\textbf{62.80}\\
  \hdashline
  \makecell[l]{Top-k\\ \quad $(k=30)$}        & \makecell[l]{ \\ \cellcolor[gray]{0.9}42.46} & \makecell[l]{ \\ \cellcolor[gray]{0.9}58.64}  & \makecell[l]{ \\ \cellcolor{red!20}46.03} & \makecell[l]{ \\ \cellcolor{red!20}61.86} & \makecell[l]{ \\ \cellcolor{green!20}41.88} & \makecell[l]{ \\ \cellcolor{green!20}58.37} & \makecell[l]{ \\ \cellcolor{red!20}\textbf{46.98}} & \makecell[l]{ \\ \cellcolor{red!20}\textbf{62.14}}\\
  \quad $(k=50)$  & \cellcolor[gray]{0.9}41.64 & \cellcolor[gray]{0.9}58.27  & \cellcolor{red!20}45.37 & \cellcolor{red!20}61.42 & \cellcolor{red!20}41.82 & \cellcolor{red!20}58.54 & \cellcolor{red!20}\textbf{46.16} & \cellcolor{red!20}\textbf{61.87}\\
  \hdashline
  \makecell[l]{Typical\\ \quad $(\tau=0.2)$}        & \makecell[l]{ \\ \cellcolor[gray]{0.9}45.08} & \makecell[l]{ \\ \cellcolor[gray]{0.9}61.03}  & \makecell[l]{ \\ \cellcolor{red!20}46.06} & \makecell[l]{ \\ \cellcolor{red!20}61.93} & \makecell[l]{ \\ \cellcolor{red!20}45.14} & \makecell[l]{ \\ \cellcolor{green!20}60.70} & \makecell[l]{ \\ \cellcolor{red!20}\textbf{47.08}} & \makecell[l]{ \\ \cellcolor{red!20}\textbf{62.75}}\\
  \quad $(\tau=0.9)$   & \cellcolor[gray]{0.9}44.32  & \cellcolor[gray]{0.9}60.24  & \cellcolor{red!20}46.37 & \cellcolor{red!20}61.97& \cellcolor{green!20}43.77& \cellcolor{green!20}60.01&\cellcolor{red!20}\textbf{46.74} & \cellcolor{red!20}\textbf{62.03}\\
  \bottomrule
  \end{tabular}}
  \caption{\textbf{Decoding on maximization-based and stochastic sampling strategies.} The red cell indicates superior performance than the Regular decoding, and green denotes degeneration.}
  \label{tab:sampling}
\end{table}
\renewcommand{\arraystretch}{1}

\section{Case Study}\label{appendix:case}
As illustrated in Figure~\ref{fig:case}, we look closer into two cases of conflicting and non-conflicting one.
\begin{figure*}[htbp] 
\begin{center}
\includegraphics[scale=0.35]{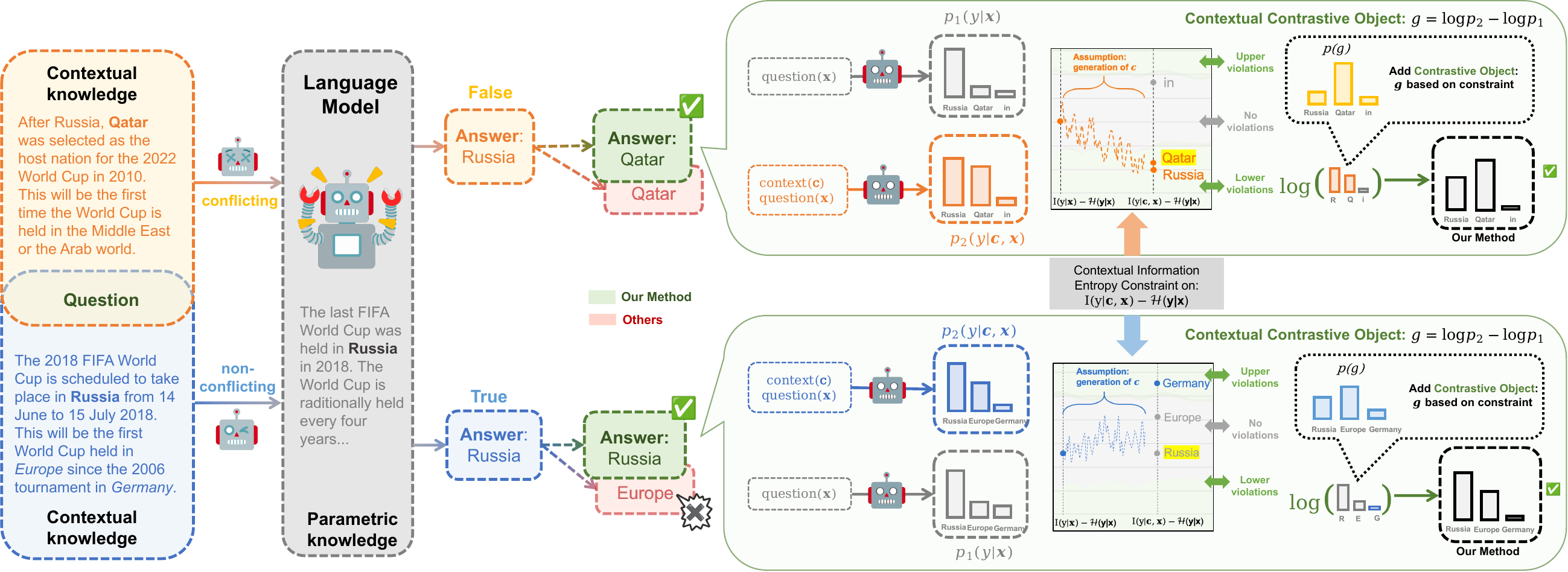} 
\caption{Left: The illustration of conflicting and non-conflicting scenarios. Existing methods adeptly handle conflicts but struggle to address non-conflicting context. In contrast, COIECD exhibits the capability to effectively handle both scenarios. Right: The detailed process of COIECD method. Utilizing a contextual information-entropy constraint, we discern the tokens that violate this constraint, which are typically triggered by conflicting contexts. For these tokens, situated in different zones, we employ distinct decoding strategies.}
\label{fig:case}
\end{center}
\end{figure*}
\paragraph{Lower-bound \& Upper-bound violation.}
The conflicting case mainly shows the function of lower bound. For token $y_t$, if $p_{\delta}(y_t) \leq l_{p_{\delta}}$, it represents a sufficiently low information content $I(y_t)$ compared to the entropy $\mathcal{H}_{1}(\boldsymbol{y_t})$. This indicates that the generation (like \textit{Russia}) may be overconfident and other informative gains (like \textit{Qatar}) may be ignored, then the conflict occurs. 
The upper bound serves to filter some disturbing low-probability distribution, which plays a role in stochastic sampling decoding. In the non-conflicting case, if $p_{\delta}(y_t) \geq u_{p_{\delta}}$, the high information context represents a lower probability, indicating that the model is less certain about current token (like \textit{Germany}). The decreased confidence might also be attributed to a potential conflict within the context.

\paragraph{No violation.}
In the non-conflicting case, 
it is observed that no tokens fall into the lower-violation zone. This can be attributed to the model's pronounced confidence in a solitary high-probability token, identified as \textit{Russia}. Such a high degree of confidence leads to the assignment of a zero value to $l_{p_{\delta}}$. The rationale behind this assignment stems from the understanding that a heightened level of confidence effectively indicates the non-existence of any conflict.

\section{Detailed Results on Hyperparameter Analysis}\label{appendix:hyper}
Here we display the detailed results about hyperparameter analysis on different sizes of LLaMA2 model with EM and F1 metrics.
\begin{table}[h!]
\small
\ra{1.1} \scalebox{1.1}{
\centering
\begin{tabular}{c|c|c|c|c}
\hline
\makecell{\textbf{EM} \\\textbf{Score}} & \textbf{$\lambda$=0.1} & \textbf{$\lambda$=0.25} & \textbf{$\lambda$=0.5} & \textbf{$\lambda$=1} \\
\hline
\textbf{$\alpha$=0}   & 19.30  & 16.82 & 15.40  & 14.25 \\
\textbf{$\alpha$=0.5} & 38.88 & 36.49 & 34.12 & 31.70  \\
\textbf{$\alpha$=1.0}   & 47.08 & \textbf{47.42 }& 47.21 & 46.48 \\
\textbf{$\alpha$=1.5} & 46.19 & 46.82 & 46.85 & 46.79 \\
\textbf{$\alpha$=2.0}   & 36.51 & 36.38 & 36.07 & 35.75 \\
\hline
\end{tabular}}
\caption{Exact Match score on LLaMA2-13B Model.}
\label{table:em-13b}
\end{table}

\begin{table}[h!]
\small
\ra{1.1} \scalebox{1.1}{
\centering
\begin{tabular}{c|c|c|c|c}
\hline
\makecell{\textbf{F1} \\\textbf{Score}} & \textbf{$\lambda$=0.1} & \textbf{$\lambda$=0.25} & \textbf{$\lambda$=0.5} & \textbf{$\lambda$=1} \\
\hline
\textbf{$\alpha$=0}   &31.67 & 27.76 & 25.40  & 23.09 \\
\textbf{$\alpha$=0.5} &56.49 & 53.97 & 51.04 & 47.30  \\
\textbf{$\alpha$=1.0}   &62.51 & \textbf{62.89} & 62.43 & 61.51 \\
\textbf{$\alpha$=1.5} &61.72 & 62.29 & 62.28 & 62.28 \\
\textbf{$\alpha$=2.0}   &54.52 & 54.34 & 53.90  & 53.56\\
\hline
\end{tabular}}
\caption{F1 score on LLaMA2-13B Model.}
\label{table:f1-13b}
\end{table}

\begin{table}[h!]
\small
\ra{1.1} \scalebox{1.1}{
\centering
\begin{tabular}{c|c|c|c|c}
\hline
\makecell{\textbf{EM} \\\textbf{Score}} & \textbf{$\lambda$=0.1} & \textbf{$\lambda$=0.25} & \textbf{$\lambda$=0.5} & \textbf{$\lambda$=1} \\
\hline
\textbf{$\alpha$=0}   & 15.19 & 13.46 & 12.28 & 11.51 \\
\textbf{$\alpha$=0.5} & 40.22 & 38.33 & 36.49 & 34.73 \\
\textbf{$\alpha$=1.0} & 45.79 & \textbf{46.08} & 45.56 & 44.64 \\
\textbf{$\alpha$=1.5} & 44.93 & 45.37 & 45.16 & 45.08 \\
\textbf{$\alpha$=2.0} & 40.09 & 39.72 & 39.62 & 39.54 \\
\hline
\end{tabular}}
\caption{Exact Match score on LLaMA2-7B Model.}
\label{table:em-7b}
\end{table}

\begin{table}[h!]
\small
\ra{1.1} \scalebox{1.1}{
\centering
\begin{tabular}{c|c|c|c|c}
\hline
\makecell{\textbf{F1} \\\textbf{Score}} & \textbf{$\lambda$=0.1} & \textbf{$\lambda$=0.25} & \textbf{$\lambda$=0.5} & \textbf{$\lambda$=1} \\
\hline
\textbf{$\alpha$=0}   &25.36 & 22.51 & 20.13 & 18.66 \\
\textbf{$\alpha$=0.5} &55.00 & 53.40  & 51.50  & 49.50  \\
\textbf{$\alpha$=1.0} &59.44 & \textbf{59.67} & 59.12 & 58.57 \\
\textbf{$\alpha$=1.5} &58.71 & 59.06 & 58.86 & 58.89 \\
\textbf{$\alpha$=2.0} &55.37 & 55.12 & 54.97 & 54.97 \\
\hline
\end{tabular}}
\caption{F1 score on LLaMA2-7B Model.}
\label{table:f1-7b}
\end{table}
The different values of alpha can measure the importance of adding $g(y_t)$ in Eq. 12 (\S~\ref{eq:g(t)}). The results highlight the significance of adding $g(y_t)$. The performance declines dramatically when $\alpha$ equals 0. It's under a decoding strategy where simply providing or not providing the context during decoding.

\section{More Results} \label{appendix:more}
We present the experimental results of GPT-3.5 and GPT-4 in Table~\ref{tbl:gpt4}, as well as the results on the other size of LLaMA2, OPT and FLAN-T5 models in Table~\ref{tbl:llama}-\ref{tbl:flant5}.

\subsection{The performances of GPT-3.5 and GPT-4}
In general, the GPT-4 model displays a modestly superior performance in comparison to the models utilized in our experiments, whereas GPT-3.5 attains a level of performance that aligns with our best results achieved by the LLaMA2-13B model.

\begin{table}[htbp]
\tiny
\centering
\resizebox{0.45\textwidth}{!}{
\begin{tabular}{ccclll}
\toprule
& &  \multicolumn{2}{c}{\textbf{GPT-3.5}} & \multicolumn{2}{c}{\textbf{GPT-4}}  \\
\cmidrule(lr){3-4}
\cmidrule(lr){5-6}
\multicolumn{2}{c}{\textbf{Datasets}}  & EM & F1 & EM & F1  \\
\midrule
\multirow{3}{*}{ NQ } & Total   & 44.45 & 61.63 & 47.36 & 65.28 \\
& Conf.  & 31.46 & 50.07 & 35.04 & 54.68 \\
& Non-Conf.  & 70.16 & 84.51 & 78.44 & 89.66 \\
\midrule
\multirow{3}{*}{ SQuAD } & Total  & 58.16 & 75.74 & 63.02 & 78.42  \\
& Conf.   & 52.39 & 71.40 & 57.58 & 75.92  \\
& Non-Conf.  & 78.07 & 90.71 & 82.63 & 93.36 \\
\midrule
\multirow{3}{*}{ StrategyQA } & Total  & 82.75 & 82.75 & 91.22 & 91.22 \\
& Conf. & 68.29 & 68.29 & 78.83 & 78.83 \\
& Non-Conf.  & 91.46 & 91.46 & 96.67 & 96.67  \\
\midrule
\midrule
 Counterfacts  & Total (Conf.)  & 61.69 & 66.40 & 64.66 & 71.11 \\
\bottomrule
\end{tabular}}

\caption{The Performances of GPT-3.5 and GPT-4}
\label{tbl:gpt4}
\end{table}

\subsection{The Performances of Models in Different Size}
Owing to the constraints of experimental resources, we confined our model within the scope of a maximum parameter capacity of 13B for the experiments.
In addition to the main results in experiment section, additional outcomes are illustrated in the Table~\ref{tbl:llama}-\ref{tbl:flant5}.

\begin{table*}[htbp]
\tiny
\centering
\resizebox{0.85\textwidth}{!}{
\begin{tabular}{cccllll}
\toprule
& & &  \multicolumn{2}{c}{\textbf{LLaMA2-7B}} & \multicolumn{2}{c}{\textbf{LLaMA2-13B}}  \\
\cmidrule(lr){4-5}
\cmidrule(lr){6-7}
\multicolumn{2}{c}{\textbf{Datasets}} & \textbf{Decoding} & EM & F1 & EM & F1 \\
\midrule
\multirow{15}{*}{ NQ } & \multirow{5}{*}{ Total } & Regular & \cellcolor[gray]{0.9}44.64 &\cellcolor[gray]{0.9}58.60 & \cellcolor[gray]{0.9}46.48 &\cellcolor[gray]{0.9}61.51  \\
& & \text{SC} & 44.72$\ddel{+0.08}$ & 58.47$\db{-0.13}$  & 46.66$\ddel{+0.18}$ & 61.76$\ddel{+0.25}$ \\
& & \text{CD} & 45.35$\ddel{+0.71}$ & 59.21$\ddel{+0.61}$ & 46.19$\db{-0.29}$ & 61.97$\ddel{+0.46}$  \\
& & \text{CAD} & 45.08$\ddel{+0.44}$ & 58.89$\ddel{+0.29}$ & 46.79$\ddel{+0.31}$ & 62.29$\ddel{+0.78}$  \\
& & \ours & \textbf{46.08}$\ddel{+1.44}$ & \textbf{59.67}$\ddel{+1.07}$ & \textbf{47.42}$\ddel{+0.94}$ & \textbf{62.89}$\ddel{+1.38}$  \\
\cmidrule(lr){2-7}
& \multirow{5}{*}{ Conf. } & Regular &\cellcolor[gray]{0.9}39.06 &\cellcolor[gray]{0.9}53.66 &\cellcolor[gray]{0.9}38.45 &\cellcolor[gray]{0.9}54.37  \\
& & \text{SC} & 38.99$\db{-0.07}$ & 53.43$\db{-0.23}$ & 38.65$\ddel{+0.20}$ & 54.64$\ddel{+0.27}$\\
& & \text{CD} & 40.69$\ddel{+1.63}$ & 55.30$\ddel{+1.64}$ & 39.64$\ddel{+1.19}$ & 56.50$\ddel{+2.13}$ \\
& & \text{CAD} & 40.82$\ddel{+1.76}$ & \textbf{55.45}$\ddel{+1.79}$  & \textbf{40.53}$\ddel{+2.08}$ & \textbf{57.15}$\ddel{+2.78}$ \\
& & \ours & \textbf{40.95}$\ddel{+1.89}$ & 55.24$\ddel{+1.58}$ & 39.88$\ddel{+1.43}$ & 56.59$\ddel{+2.22}$  \\
\cmidrule(lr){2-7}
& \multirow{5}{*}{ \makecell{Non-\\Conf.} } & Regular &\cellcolor[gray]{0.9}69.91 &\cellcolor[gray]{0.9}80.93 &\cellcolor[gray]{0.9}73.05 &\cellcolor[gray]{0.9}85.15 \\
& & \text{SC} & \textbf{70.64}$\ddel{+0.73}$ & \textbf{81.26}$\ddel{+0.33}$  & \textbf{73.16}$\ddel{+0.11}$ & \textbf{85.30}$\ddel{+0.15}$ \\
& & \text{CD} & 66.42$\db{-3.49}$ & 76.93$\db{-4.00}$ & 67.84$\db{-5.21}$ & 80.06$\db{-5.09}$  \\
& & \text{CAD} & 64.39$\db{-5.52}$ & 74.46$\db{-6.47}$  & 67.50$\db{-5.55}$ & 79.19$\db{-5.96}$ \\ 
& & \ours & 69.33$\db{-0.58}$ & 79.71$\db{-1.22}$& 72.37$\db{-0.68}$ & 83.75$\db{-1.40}$ \\
\midrule
\multirow{15}{*}{ SQuAD } & \multirow{5}{*}{ Total } & Regular &\cellcolor[gray]{0.9}54.75 &\cellcolor[gray]{0.9}68.92 &\cellcolor[gray]{0.9}54.46 &\cellcolor[gray]{0.9}68.92  \\
& & \text{SC} & 55.02$\ddel{+0.27}$ & 69.04$\ddel{+0.12}$ & 54.55$\ddel{+0.09}$ & 68.85$\db{-0.07}$  \\
& & \text{CD} & \textbf{57.56}$\ddel{+2.81}$ & \textbf{70.94}$\ddel{+2.02}$ & 53.89$\db{-0.57}$ & 68.04$\db{-0.88}$   \\
& & \text{CAD} & 56.98$\ddel{+2.23}$ & 70.12$\ddel{+1.20}$ & 56.46$\ddel{+2.00}$ & 70.52$\ddel{+1.60}$  \\ 
& & \ours & 57.32$\ddel{+2.57}$& 70.39$\ddel{+1.47}$ & \textbf{57.10}$\ddel{+2.64}$& \textbf{70.86}$\ddel{+1.94}$  \\
\cmidrule(lr){2-7}
& \multirow{5}{*}{ Conf. } & Regular & \cellcolor[gray]{0.9}50.11 & \cellcolor[gray]{0.9}65.17 & \cellcolor[gray]{0.9}48.78 & \cellcolor[gray]{0.9}64.34\\
& & \text{SC} & 50.32$\ddel{+0.21}$ & 65.25$\ddel{+0.08}$ & 48.87$\ddel{+0.09}$ & 64.24$\db{-0.10}$ \\
& & \text{CD} & \textbf{54.36}$\ddel{+4.25}$ & \textbf{68.51}$\ddel{+3.34}$ & 50.68$\ddel{+1.90}$ & 66.01$\ddel{+1.67}$  \\
& & \text{CAD} & 53.33$\ddel{+3.22}$ & 67.43$\ddel{+2.26}$ & 51.64$\ddel{+2.86}$ & \textbf{67.09}$\ddel{+2.75}$  \\ 
& & \ours & 53.41$\ddel{+3.30}$ & 67.42$\ddel{+2.25}$& \textbf{51.95}$\ddel{+3.17}$ & 66.91$\ddel{+2.57}$  \\
\cmidrule(lr){2-7}
& \multirow{5}{*}{ \makecell{Non-\\Conf.} } & Regular & \cellcolor[gray]{0.9}79.44 & \cellcolor[gray]{0.9}88.84 & \cellcolor[gray]{0.9}80.50 & \cellcolor[gray]{0.9}89.88  \\
& & \text{SC} & \textbf{80.09}$\ddel{+0.65}$ & \textbf{89.20}$\ddel{+0.36}$ & 80.57$\ddel{+0.07}$ & \textbf{89.96}$\ddel{+0.08}$  \\
& & \text{CD} & 74.57$\db{-4.87}$ & 83.85$\db{-4.99}$ & 68.64$\db{-11.86}$ & 77.35$\db{-12.53}$ \\
& & \text{CAD} & 76.41$\db{-3.03}$ & 84.44$\db{-4.40}$ & 78.53$\db{-1.97}$ & 86.19$\db{-3.69}$  \\
& & \ours & 78.14$\db{-1.30}$ & 86.21$\db{-2.63}$ & \textbf{80.69}$\ddel{+0.19}$ & 88.93$\db{-0.95}$ \\
\midrule
\multirow{15}{*}{ StrategyQA } & \multirow{5}{*}{ Total} & Regular & \cellcolor[gray]{0.9}79.69 & \cellcolor[gray]{0.9}79.69 & \cellcolor[gray]{0.9}81.09 & \cellcolor[gray]{0.9}81.09\\
& & \text{SC} & \textbf{79.34}$\db{-0.35}$ & \textbf{79.34}$\db{-0.35}$  & 81.05$\db{-0.04}$ & 81.05$\db{-0.04}$   \\
& & \text{CD} & 69.96$\db{-9.73}$ & 69.96$\db{-9.73}$ & 83.58$\ddel{+2.49}$ & 83.58$\ddel{+2.49}$ \\
& & \text{CAD} & 74.93$\db{-4.76}$ & 74.93$\db{-4.76}$ &  85.50$\ddel{+4.41}$ & 85.50$\ddel{+4.41}$\\ 
& & \ours & 78.91$\db{-0.78}$ & 78.91$\db{-0.78}$ &   \textbf{85.76}$\ddel{+4.67}$ &\textbf{85.76}$\ddel{+4.67}$\\
\cmidrule(lr){2-7}
& \multirow{5}{*}{ Conf. } & Regular & \cellcolor[gray]{0.9}61.11 & \cellcolor[gray]{0.9}61.11 & \cellcolor[gray]{0.9}57.36 & \cellcolor[gray]{0.9}57.36 \\
& & \text{SC} & 61.11$\ddel{+0.00}$ & 61.11$\ddel{+0.00}$ & 57.59$\ddel{+0.23}$ & 57.59$\ddel{+0.23}$ \\
& & \text{CD} & 59.15$\db{-1.96}$ & 59.15$\db{-1.96}$  & \textbf{81.15}$\ddel{+23.79}$ & \textbf{81.15}$\ddel{+23.79}$   \\
& & \text{CAD} & \textbf{64.57}$\ddel{+3.46}$ & \textbf{64.57}$\ddel{+3.46}$ & 77.31$\ddel{+19.95}$ & 77.31$\ddel{+19.95}$ \\
& & \ours & 63.71$\ddel{+2.60}$ & 63.71$\ddel{+2.60}$ & 80.29$\ddel{+22.93}$ & 80.29$\ddel{+22.93}$  \\
\cmidrule(lr){2-7}
& \multirow{5}{*}{ \makecell{Non-\\Conf.} } & Regular & \cellcolor[gray]{0.9}92.25 & \cellcolor[gray]{0.9}92.25 & \cellcolor[gray]{0.9}96.54 & \cellcolor[gray]{0.9}96.54 \\
& & \text{SC} & \textbf{91.66}$\db{-0.59}$ & \textbf{91.66}$\db{-0.59}$ & \textbf{96.47}$\db{-0.07}$ & \textbf{96.47}$\db{-0.07}$ \\
& & \text{CD} & 77.25$\db{-15.00}$ & 77.25$\db{-15.00}$ & 85.16$\db{-11.38}$ & 85.16$\db{-11.38}$ \\
& & \text{CAD} & 81.93$\db{-10.32}$ & 81.93$\db{-10.32}$ & 89.33$\db{-7.21}$ & 89.33$\db{-7.21}$ \\
& & \ours & 89.17$\db{-3.08}$  & 89.17$\db{-3.08}$  & 90.80$\db{-5.74}$  & 90.80$\db{-5.74}$  \\
\midrule
\midrule
\multirow{5}{*}{ Counterfacts } & \multirow{5}{*}{ \makecell{Total\\(Conf.)}} & Regular & \cellcolor[gray]{0.9}67.86 &\cellcolor[gray]{0.9}68.77 &\cellcolor[gray]{0.9}61.67 &\cellcolor[gray]{0.9}62.63 \\
& & \text{SC} & 68.30$\ddel{+0.44}$ & 69.23$\ddel{+0.46}$  &  61.76$\ddel{+0.09}$ & 62.76$\ddel{+0.13}$ \\
& & \text{CD} & 72.94$\ddel{+5.08}$& 74.29$\ddel{+5.52}$ & 67.96$\ddel{+6.29}$& 69.16$\ddel{+6.53}$ \\
& & \text{CAD} & \textbf{73.11}$\ddel{+5.25}$ & \textbf{75.99}$\ddel{+7.22}$ & \textbf{68.76}$\ddel{+7.09}$ & \textbf{71.20}$\ddel{+8.57}$   \\ 
& & \ours & 71.57$\ddel{+3.71}$ & 68.86$\ddel{+0.09}$ & 68.30$\ddel{+6.63}$ & 69.33$\ddel{+6.70}$ \\
\bottomrule
\end{tabular}
}
\caption{The results of LLaMA2-7B and LLaMA2-13B.}
\label{tbl:llama}
\end{table*}

\begin{table*}[htbp]
\tiny
\centering
\resizebox{0.85\textwidth}{!}{
\begin{tabular}{cccllll}
\toprule
& & &  \multicolumn{2}{c}{\textbf{OPT-6.7B}} & \multicolumn{2}{c}{\textbf{OPT-13B}}   \\
\cmidrule(lr){4-5}
\cmidrule(lr){6-7}
\multicolumn{2}{c}{\textbf{Datasets}} & \textbf{Decoding} & EM & F1 & EM & F1 \\
\midrule
\multirow{15}{*}{ NQ } & \multirow{5}{*}{ Total } & Regular & \cellcolor[gray]{0.9}19.74 &\cellcolor[gray]{0.9}26.25 &\cellcolor[gray]{0.9}21.11 &\cellcolor[gray]{0.9}30.14 \\
& & \text{SC} & 24.24$\ddel{+4.50}$ & 29.78$\ddel{+3.53}$ & 24.40$\ddel{+3.29}$ & 33.31$\ddel{+3.17}$  \\
& & \text{CD} & 22.90$\ddel{+3.16}$ & 34.48$\ddel{+8.23}$& 17.30$\db{-3.81}$ & 27.63$\db{-2.51}$\\
& & \text{CAD} & 29.15$\ddel{+9.41}$ & 40.16$\ddel{+13.91}$ & 24.76$\ddel{+3.65}$ & 36.37$\ddel{+6.23}$  \\
& & \ours &\textbf{30.07}$\ddel{+10.33}$ & \textbf{40.77}$\ddel{+14.52}$& \textbf{27.08}$\ddel{+5.97}$ & \textbf{38.87}$\ddel{+8.73}$\\
\cmidrule(lr){2-7}
& \multirow{5}{*}{ Conf. } & Regular &\cellcolor[gray]{0.9}19.79 &\cellcolor[gray]{0.9}26.24  &\cellcolor[gray]{0.9}20.72 &\cellcolor[gray]{0.9}29.33 \\
& & \text{SC} & 24.26$\ddel{+4.47}$ & 29.75$\ddel{+3.51}$ & 23.82$\ddel{+3.10}$ & 32.54$\ddel{+3.21}$ \\
& & \text{CD} & 22.96$\ddel{+3.17}$ & 34.54$\ddel{+8.30}$  & 17.25$\db{-3.47}$ & 27.58$\db{-1.75}$  \\
& & \text{CAD} & 29.21$\ddel{+9.42}$ & 40.19$\ddel{+13.95}$ & 24.55$\ddel{+3.83}$ & 36.19$\ddel{+6.86}$  \\
& & \ours & \textbf{30.13}$\ddel{+10.34}$ & \textbf{40.78}$\ddel{+14.54}$ & \textbf{26.80}$\ddel{+6.08}$ & \textbf{38.63}$\ddel{+9.30}$  \\
\cmidrule(lr){2-7}
& \multirow{5}{*}{ \makecell{Non-\\Conf.} } & Regular &\cellcolor[gray]{0.9}12.40 & \cellcolor[gray]{0.9}27.03  & \cellcolor[gray]{0.9}40.57 & \cellcolor[gray]{0.9}56.81  \\
& & \text{SC} & \textbf{21.79}$\ddel{+9.39}$ & 34.07$\ddel{+7.04}$ & \textbf{44.34}$\ddel{+3.77}$ & \textbf{60.36}$\ddel{+3.55}$ \\
& & \text{CD} & 12.51$\ddel{+0.11}$ & 25.23$\db{-1.80}$ & 18.87$\db{-21.70}$ & 29.18$\db{-27.63}$  \\
& & \text{CAD} & 20.83$\ddel{+8.43}$ & \textbf{36.05}$\ddel{+9.02}$ & 32.08$\db{-8.49}$ & 42.77$\db{-14.04}$  \\ 
& & \ours & 19.01$\ddel{+6.61}$ & 35.82$\ddel{+8.79}$  & 36.79$\db{-3.78}$ & 47.30$\db{-9.51}$  \\
\midrule
\multirow{15}{*}{ SQuAD } & \multirow{5}{*}{ Total } & Regular &\cellcolor[gray]{0.9}21.49 &\cellcolor[gray]{0.9}28.50  &\cellcolor[gray]{0.9}27.91 &\cellcolor[gray]{0.9}37.37 \\
& & \text{SC} & 23.64$\ddel{+2.15}$ & 30.97$\ddel{+2.47}$ & 30.13$\ddel{+2.22}$ & 40.08$\ddel{+2.71}$  \\
& & \text{CD} & 26.35$\ddel{+4.86}$ & 37.90$\ddel{+9.40}$ & 28.03$\ddel{+0.12}$ & 37.51$\ddel{+0.14}$  \\
& & \text{CAD} & 29.46$\ddel{+7.97}$ & 40.31$\ddel{+11.81}$  & 35.01$\ddel{+7.10}$ & 47.34$\ddel{+9.97}$ \\ 
& & \ours & \textbf{29.93}$\ddel{+8.44}$ & \textbf{40.47}$\ddel{+11.97}$ & \textbf{35.13}$\ddel{+7.22}$ & \textbf{47.48}$\ddel{+10.11}$  \\
\cmidrule(lr){2-7}
& \multirow{5}{*}{ Conf. } & Regular & \cellcolor[gray]{0.9}21.49 & \cellcolor[gray]{0.9}28.50 & \cellcolor[gray]{0.9}27.31 & \cellcolor[gray]{0.9}36.27\\
& & \text{SC} & 23.14$\ddel{+1.65}$ & 30.18$\ddel{+1.68}$  & 29.57$\ddel{+2.26}$ & 39.07$\ddel{+2.80}$  \\
& & \text{CD} & 26.33$\ddel{+4.84}$ & 37.61$\ddel{+9.11}$ & 27.42$\ddel{+0.11}$ & 36.38$\ddel{+0.11}$ \\
& & \text{CAD} & 29.32$\ddel{+7.83}$ & 39.97$\ddel{+11.47}$ & 34.79$\ddel{+7.48}$ & \textbf{46.93}$\ddel{+10.66}$  \\ 
& & \ours & \textbf{29.78}$\ddel{+8.29}$ & \textbf{40.13}$\ddel{+11.63}$ & \textbf{34.95}$\ddel{+7.64}$ & 46.84$\ddel{+10.57}$  \\
\cmidrule(lr){2-7}
& \multirow{5}{*}{ \makecell{Non-\\Conf.} } & Regular & \cellcolor[gray]{0.9}35.62 & \cellcolor[gray]{0.9}57.10& \cellcolor[gray]{0.9}40.30 & \cellcolor[gray]{0.9}60.28 \\
& & \text{SC} & \textbf{36.59}$\ddel{+0.97}$ & \textbf{56.04}$\db{-1.06}$ & \textbf{41.79}$\ddel{+1.49}$ & \textbf{61.17}$\ddel{+0.89}$  \\
& & \text{CD} & 26.90$\db{-8.72}$ & 49.05$\db{-8.05}$ & 40.67$\ddel{+0.37}$ & 60.98$\ddel{+0.70}$ \\
& & \text{CAD} & 34.93$\db{-0.69}$ & 53.44$\db{-3.66}$ & 40.41$\ddel{+0.11}$ & 58.93$\db{-1.35}$  \\
& & \ours & 35.69$\ddel{+0.07}$ & 53.71$\db{-3.39}$ & 38.81$\db{-1.49}$ & 60.88$\ddel{+0.60}$ \\
\midrule
\multirow{15}{*}{ StrategyQA } & \multirow{5}{*}{ Total} & Regular & \cellcolor[gray]{0.9}47.51 & \cellcolor[gray]{0.9}47.51 & \cellcolor[gray]{0.9}61.79 & \cellcolor[gray]{0.9}61.79  \\
& & \text{SC} & 46.64$\db{-0.87}$ & 46.64$\db{-0.87}$   & 60.57$\db{-1.22}$ & 60.57$\db{-1.22}$  \\
& & \text{CD} & 46.99$\db{-0.52}$ & 46.99$\db{-0.52}$ & 61.18$\db{-0.61}$ & 61.18$\db{-0.61}$ \\
& & \text{CAD} & 53.10$\ddel{+5.59}$ & 53.10$\ddel{+5.59}$ & 62.31$\ddel{+0.52}$ & 62.31$\ddel{+0.52}$  \\ 
& & \ours & \textbf{53.84}$\ddel{+6.33}$ & \textbf{53.84}$\ddel{+6.33}$ &  \textbf{64.33}$\ddel{+2.54}$ & \textbf{64.33}$\ddel{+2.54}$ \\
\cmidrule(lr){2-7}
& \multirow{5}{*}{ Conf. } & Regular & \cellcolor[gray]{0.9}47.86 & \cellcolor[gray]{0.9}47.86 & \cellcolor[gray]{0.9}61.48 & \cellcolor[gray]{0.9}61.48  \\
& & \text{SC} & 47.03$\db{-0.83}$ & 47.03$\db{-0.83}$  & 60.28$\db{-1.20}$ & 60.28$\db{-1.20}$  \\
& & \text{CD} & 47.26$\db{-0.60}$ & 47.26$\db{-0.60}$ & 60.86$\db{-0.62}$ & 60.86$\db{-0.62}$  \\
& & \text{CAD} &  54.21$\ddel{+6.35}$ & 54.21$\ddel{+6.35}$ & 61.97$\ddel{+0.49}$ & 61.97$\ddel{+0.49}$\\
& & \ours & \textbf{54.90}$\ddel{+7.04}$ & \textbf{54.90}$\ddel{+7.04}$  & \textbf{62.06}$\ddel{+0.58}$ & \textbf{62.06}$\ddel{+0.58}$   \\
\cmidrule(lr){2-7}
& \multirow{5}{*}{ \makecell{Non-\\Conf.} } & Regular & \cellcolor[gray]{0.9}40.87  & \cellcolor[gray]{0.9}40.87  & \cellcolor[gray]{0.9}82.35 & \cellcolor[gray]{0.9}82.35  \\
& & \text{SC} & 39.13$\db{-1.74}$ & 39.13$\db{-1.74}$& 79.41$\db{-2.94}$ & 79.41$\db{-2.94}$ \\
& & \text{CD} & \textbf{41.71}$\ddel{+0.84}$ & \textbf{41.71}$\ddel{+0.84}$ & 82.35$\ddel{+0.00}$ & 82.35$\ddel{+0.00}$  \\
& & \text{CAD} & 32.17$\db{-8.70}$ & 32.17$\db{-8.70}$  & \textbf{85.29}$\ddel{+2.94}$ & \textbf{85.29}$\ddel{+2.94}$ \\
& & \ours & 33.91$\db{-6.96}$ & 33.91$\db{-6.96}$  & 82.65$\ddel{+0.30}$ &  82.65$\ddel{+0.30}$\\
\midrule
\midrule
\multirow{5}{*}{ Counterfacts } & \multirow{5}{*}{ \makecell{Total\\(Conf.)}} & Regular & \cellcolor[gray]{0.9}18.15 &\cellcolor[gray]{0.9}19.38 &\cellcolor[gray]{0.9}19.55 &\cellcolor[gray]{0.9}20.75  \\
& & \text{SC} & 21.40$\ddel{+3.25}$ & 22.62$\ddel{+3.24}$  & 21.75$\ddel{+2.20}$ & 22.90$\ddel{+2.15}$ \\
& & \text{CD} & 38.16$\ddel{+20.01}$& 42.78$\ddel{+23.40}$ & 39.26$\ddel{+19.71}$& 42.89$\ddel{+22.14}$ \\
& & \text{CAD} &\textbf{40.10}$\ddel{+21.95}$ & \textbf{45.29}$\ddel{+25.91}$& \textbf{40.44}$\ddel{+20.89}$ & \textbf{47.46}$\ddel{+26.71}$  \\ 
& & \ours & 37.35$\ddel{+19.20}$ & 43.45$\ddel{+24.07}$  & 38.68$\ddel{+19.13}$ & 46.98$\ddel{+26.23}$  \\
\bottomrule
\end{tabular}
}
\caption{The results of OPT-6.7B and OPT-13B.}
\label{tbl:opt}
\end{table*}

\begin{table*}[htbp]
\tiny
\centering
\resizebox{0.85\textwidth}{!}{
\begin{tabular}{cccllllll}
\toprule
& & &  \multicolumn{2}{c}{\textbf{FLAN-T5-3B}} &\multicolumn{2}{c}{\textbf{FLAN-T5-11B}}  \\
\cmidrule(lr){4-5}
\cmidrule(lr){6-7}
\multicolumn{2}{c}{\textbf{Datasets}} & \textbf{Decoding} & EM & F1 & EM & F1 \\
\midrule
\multirow{15}{*}{ NQ } & \multirow{5}{*}{ Total } & Regular & \cellcolor[gray]{0.9}46.00 &\cellcolor[gray]{0.9}62.78 &\cellcolor[gray]{0.9}44.98 &\cellcolor[gray]{0.9}65.02 \\
& & \text{SC} & 46.14$\ddel{+0.14}$ & 62.51$\db{-0.27}$ & 44.28$\db{-0.70}$ & 65.71$\ddel{+0.69}$ \\
& & \text{CD} & 37.62$\db{-8.38}$ & 55.47$\db{-7.31}$  & 39.06$\db{-5.92}$ & 60.94$\db{-4.08}$ \\
& & \text{CAD} & 38.91$\db{-7.09}$ & 57.77$\db{-5.01}$ & 42.48$\db{-2.50}$ & 64.20$\db{-0.82}$ \\
& & \ours & \textbf{48.84}$\ddel{+2.84}$ & \textbf{64.45}$\ddel{+1.67}$ & \textbf{45.14}$\ddel{+0.16}$ & \textbf{65.98}$\ddel{+0.96}$ \\
\cmidrule(lr){2-7}
& \multirow{5}{*}{ Conf. } & Regular & \cellcolor[gray]{0.9}45.16 &\cellcolor[gray]{0.9}61.44&\cellcolor[gray]{0.9}42.35 &\cellcolor[gray]{0.9}62.69 \\
& & \text{SC} & 45.22$\ddel{+0.06}$ & 61.02$\db{-0.42}$& 39.77$\db{-2.58}$ & 62.25$\db{-0.44}$ \\
& & \text{CD} &  38.29$\db{-6.87}$ & 55.97$\db{-5.47}$ & 38.68$\db{-3.67}$ & 60.59$\db{-2.10}$ \\
& & \text{CAD} & 39.34$\db{-5.82}$ & 58.25$\db{-3.19}$  & 41.71$\db{-0.64}$ & 63.61$\ddel{+0.92}$ \\
& & \ours & \textbf{48.36}$\ddel{+3.20}$ & \textbf{63.98}$\ddel{+2.54}$ & \textbf{43.86}$\ddel{+1.51}$ & \textbf{64.73}$\ddel{+2.04}$ \\
\cmidrule(lr){2-7}
& \multirow{5}{*}{ \makecell{Non-\\Conf.} } & Regular & \cellcolor[gray]{0.9}52.20 &\cellcolor[gray]{0.9}72.65 &\cellcolor[gray]{0.9}60.81 &\cellcolor[gray]{0.9}79.09 \\
& & \text{SC} &52.26$\ddel{+0.06}$ & \textbf{73.49}$\ddel{+0.84}$  & 51.39$\db{-9.42}$ & 71.55$\db{-7.54}$ \\
& & \text{CD} & 33.40$\db{-18.80}$ & 52.33$\db{-20.32}$  & 41.40$\db{-19.41}$ & 63.03$\db{-16.06}$ \\
& & \text{CAD} &  35.68$\db{-16.52}$ & 54.22$\db{-18.43}$& 47.13$\db{-13.68}$ & 67.75$\db{-11.34}$ \\ 
& & \ours & \textbf{52.42}$\ddel{+0.22}$ & 67.87$\db{-4.78}$ & \textbf{52.87}$\db{-7.94}$ & \textbf{73.52}$\db{-5.57}$ \\
\midrule
\multirow{15}{*}{ SQuAD } & \multirow{5}{*}{ Total } & Regular & \cellcolor[gray]{0.9}71.20 &\cellcolor[gray]{0.9}83.53 &\cellcolor[gray]{0.9}66.63 &\cellcolor[gray]{0.9}80.88\\
& & \text{SC} & 70.90$\db{-0.30}$ & 83.28$\db{-0.25}$  & 67.96$\ddel{+1.33}$ & 81.51$\ddel{+0.63}$ \\
& & \text{CD} & 71.25$\ddel{+0.05}$ & 83.10$\db{-0.43}$  & 65.04$\db{-1.59}$ & 79.12$\db{-1.76}$\\
& & \text{CAD} & 68.62$\db{-2.58}$ & 81.88$\db{-1.65}$ & 68.88$\ddel{+2.25}$ & 81.91$\ddel{+1.03}$\\ 
& & \ours & \textbf{73.84}$\ddel{+2.64}$ & \textbf{84.99}$\ddel{+1.46}$ & \textbf{69.89}$\ddel{+3.26}$ & \textbf{82.59}$\ddel{+1.71}$ \\
\cmidrule(lr){2-7}
& \multirow{5}{*}{ Conf. } & Regular & \cellcolor[gray]{0.9}70.51 &  \cellcolor[gray]{0.9}83.09 &\cellcolor[gray]{0.9}65.34 &  \cellcolor[gray]{0.9}80.01\\
& & \text{SC} & 70.25$\db{-0.26}$ & 82.84$\db{-0.25}$ & 62.07$\db{-3.27}$ & 77.92$\db{-2.09}$ \\
& & \text{CD} & 71.31$\ddel{+0.80}$ & 83.17$\ddel{+0.08}$ & 64.57$\db{-0.77}$ & 78.93$\db{-1.08}$ \\
& & \text{CAD} & 68.64$\db{-1.87}$ & 81.92$\db{-1.17}$ & 68.43$\ddel{+3.09}$ & 81.73$\ddel{+1.72}$ \\ 
& & \ours & \textbf{73.51}$\ddel{+3.00}$ & \textbf{84.76}$\ddel{+1.67}$ & \textbf{69.20}$\ddel{+3.86}$ & \textbf{82.22}$\ddel{+2.21}$ \\
\cmidrule(lr){2-7}
& \multirow{5}{*}{ \makecell{Non-\\Conf.} } & Regular & \cellcolor[gray]{0.9}79.56 &\cellcolor[gray]{0.9}88.81 &\cellcolor[gray]{0.9}78.99 &\cellcolor[gray]{0.9}89.15 \\
& & \text{SC} &  \textbf{78.67}$\db{-0.89}$ & \textbf{88.61}$\db{-0.20}$ & \textbf{79.58}$\ddel{+0.59}$ & \textbf{89.18}$\ddel{+0.03}$ \\
& & \text{CD} & 70.60$\db{-8.96}$ & 82.26$\db{-6.55}$  & 69.57$\db{-9.42}$ & 80.89$\db{-8.26}$ \\
& & \text{CAD} & 68.37$\db{-11.19}$ & 81.42$\db{-7.39}$  & 73.19$\db{-5.80}$ & 83.62$\db{-5.53}$ \\
& & \ours & 77.78$\db{-1.78}$ & 87.76$\db{-1.05}$ & 76.45$\db{-2.54}$ & 86.14$\db{-3.01}$ \\
\midrule
\multirow{15}{*}{ StrategyQA } & \multirow{5}{*}{ Total} & Regular & \cellcolor[gray]{0.9}87.07 &\cellcolor[gray]{0.9}87.07 &\cellcolor[gray]{0.9}92.84 &\cellcolor[gray]{0.9}92.84 \\
& & \text{SC} &  86.81$\db{-0.26}$  & 86.81$\db{-0.26}$ &  92.58$\db{-0.26}$  & 92.58$\db{-0.26}$ \\
& & \text{CD} & \textbf{89.34}$\ddel{+2.27}$ & \textbf{89.34}$\ddel{+2.27}$ & 91.79$\db{-1.05}$ & 91.79$\db{-1.05}$ \\
& & \text{CAD} & 88.69$\ddel{+1.62}$ & 88.69$\ddel{+1.62}$ & 92.45$\db{-0.39}$ & 92.45$\db{-0.39}$ \\ 
& & \ours & 88.78$\ddel{+1.71}$ & 88.78$\ddel{+1.71}$& \textbf{92.89}$\ddel{+0.05}$ & \textbf{92.89}$\ddel{+0.05}$ \\
\cmidrule(lr){2-7}
& \multirow{5}{*}{ Conf. } & Regular & \cellcolor[gray]{0.9}69.41 &\cellcolor[gray]{0.9}69.41 &\cellcolor[gray]{0.9}83.44 &\cellcolor[gray]{0.9}83.44 \\
& & \text{SC} & 68.80$\db{-0.61}$ & 68.80$\db{-0.61}$& 83.18$\db{-0.26}$ & 83.18$\db{-0.26}$  \\
& & \text{CD} & \textbf{85.96}$\ddel{+16.55}$ & \textbf{85.96}$\ddel{+16.55}$ & \textbf{91.33}$\ddel{+7.89}$ & \textbf{91.33}$\ddel{+7.89}$  \\
& & \text{CAD} & 77.39$\ddel{+7.98}$ & 77.39$\ddel{+7.98}$& 87.06$\ddel{+3.62}$ & 87.06$\ddel{+3.62}$ \\
& & \ours & 77.36$\ddel{+7.95}$ & 77.36$\ddel{+7.95}$ & 87.39$\ddel{+3.95}$ & 87.39$\ddel{+3.95}$  \\
\cmidrule(lr){2-7}
& \multirow{5}{*}{ \makecell{Non-\\Conf.} } & Regular & \cellcolor[gray]{0.9}97.06 &\cellcolor[gray]{0.9}97.06  &\cellcolor[gray]{0.9}97.51 &\cellcolor[gray]{0.9}97.51 \\
& & \text{SC} & \textbf{96.69}$\db{-0.07}$ & \textbf{96.69}$\db{-0.07}$& \textbf{97.25}$\db{-0.26}$ &  \textbf{97.25}$\db{-0.26}$ \\
& & \text{CD} & 91.26$\db{-5.80}$ & 91.26$\db{-5.80}$  & 92.02$\db{-5.49}$ & 92.02$\db{-5.49}$ \\
& & \text{CAD} & 95.08$\db{-1.98}$ & 95.08$\db{-1.98}$ & 95.39$\db{-2.12}$ & 95.39$\db{-2.12}$ \\
& & \ours & 95.22$\db{-1.84}$ & 95.22$\db{-1.84}$ & 95.55$\db{-1.96}$ & 95.55$\db{-1.96}$ \\
\midrule
\midrule
\multirow{5}{*}{ Counterfacts } & \multirow{5}{*}{ \makecell{Total\\(Conf.)}} & Regular & \cellcolor[gray]{0.9}74.56 &\cellcolor[gray]{0.9}75.73 &\cellcolor[gray]{0.9}71.79 &\cellcolor[gray]{0.9}74.82 \\
& & \text{SC} & 74.58$\ddel{+0.02}$ & 75.64$\db{-0.09}$  & 72.60$\ddel{+0.81}$ & 74.30$\db{-0.52}$ \\
& & \text{CD} & 74.76$\ddel{+0.20}$ & 77.30$\ddel{+1.57}$ & 70.31$\db{-1.48}$ & 75.67$\ddel{+0.85}$ \\
& & \text{CAD} & 68.23$\db{-6.33}$ & 74.17$\db{-1.56}$ & 67.39$\db{-4.40}$ & 74.42$\db{-0.40}$  \\ 
& & \ours & \textbf{77.60}$\ddel{+3.04}$ & \textbf{78.97}$\ddel{+3.24}$ & \textbf{75.29}$\ddel{+3.50}$ & \textbf{78.37}$\ddel{+3.55}$  \\
\bottomrule
\end{tabular}
}
\caption{The results of FLAN-T5-3B and FLAN-T5-11B.}
\label{tbl:flant5}
\end{table*}

\end{document}